
\documentclass[twoside]{article}
\usepackage[accepted]{aistats2018}

\usepackage{graphicx} 
\usepackage{subcaption}

\usepackage{natbib}
\bibliographystyle{plainnat}

\usepackage{algorithm, algorithmicx, algpseudocode}
\usepackage{amsthm, amsmath, amssymb}
\usepackage{appendix}

\usepackage[nice]{nicefrac}

\newtheorem{theorem}{Theorem}

\newtheorem{corollary}{Corollary}
\newtheorem{lemma}{Lemma}

\newcommand{\expect}{\ensuremath{\mathbb{E}}}
\usepackage{color}

\begin{document} 
\runningauthor{Aditya Grover, Ramki Gummadi, Miguel L\'azaro-Gredilla, Dale Schuurmans, Stefano Ermon}
\twocolumn[
\aistatstitle{Variational Rejection Sampling}

\aistatsauthor{Aditya Grover$^\ast$ \And Ramki Gummadi$^{\ast}$ \And  Miguel L\'azaro-Gredilla}

\aistatsaddress{ Stanford University \And  Vicarious  \And Vicarious}
\aistatsauthor{Dale Schuurmans \And Stefano Ermon }

\aistatsaddress{University of Alberta \And Stanford University}
]

\begin{abstract} 
Learning latent variable models with stochastic variational inference is challenging when the approximate posterior is far from the true posterior, due to high variance in the gradient estimates. We propose a novel rejection sampling step that discards samples from the variational posterior which are assigned low likelihoods by the model. Our approach provides an arbitrarily accurate approximation of the true posterior at the expense of extra computation. Using a new gradient estimator for the resulting unnormalized proposal distribution, we achieve average improvements of $3.71$ nats and $0.21$ nats over state-of-the-art single-sample and multi-sample alternatives respectively for estimating marginal log-likelihoods using sigmoid belief networks on the MNIST dataset.
\end{abstract} 

\section{INTRODUCTION}
\label{sec:intro}

Latent variable models trained using stochastic variational inference can learn 
complex, high dimensional distributions~\citep{hoffman-jmlr2013,ranganath-aistats2013}. Learning typically involves maximization of a lower bound to the intractable log-likelihood of the observed data, marginalizing over the latent, unobserved variables. To scale to large datasets, inference is \textit{amortized} by introducing a recognition model approximating the true posterior over the latent variables, conditioned on the observed data~\citep{Dayan95, gershman-css2014}. The generative and recognition models are jointly trained and commonly parameterized using deep neural networks. While this provides flexibility, it also leads to expectations without any closed form expressions in the learning objective and corresponding gradients.

The general approach to stochastic optimization of such objectives involves Monte Carlo estimates of the gradients using the variational posterior (a.k.a. the recognition model) as a proposal distribution~\citep{mnih-icml2016}. A simple feed forward network, however, may not capture the full complexity of the posterior, a difficulty which shows up in practice as high variance in the gradients estimated with respect to the parameters of the proposal distribution. 

There is a vast body of prior work in variance reduction for stochastic optimization, including recent work focusing on variational methods for generative modeling. The standard approach is to use score function estimators with appropriate baselines~\citep{glynn1990likelihood,williams1992simple,fu2006gradient}. 
Many continuous distributions are also amenable to reparameterization, which transforms the original problem of taking gradients with respect to the parameters of the proposal to the simpler problem of taking gradients with respect to a deterministic function~\citep{kingma-iclr2014, rezende-icml2014, titsias2014doubly}. Finally, a complementary technique for variance reduction is the use of multi-sample objectives which compute importance weighted gradient estimates based on multiple samples from the proposal~\citep{burda-iclr2016,mnih-icml2016}. We discuss these approaches 
in Section~\ref{sec:prelim}.

In this work, we 
propose a new class of estimators for variational learning based on rejection sampling. The \textit{variational rejection sampling} approach modifies the sampling procedure into a two-step process: first, a proposal distribution (in our case, the variational posterior of a generative model) proposes a sample and then we explicitly accept or reject this sample based on a novel differentiable accept-reject test. The test is designed to reject samples from the variational posterior that are assigned low likelihoods by the generative model, wherein the threshold for rejection can be controlled based on the available computation.

We show how this procedure leads to a modification of the original variational posterior to a richer family of approximating \textit{resampled} proposal distributions. The modification is defined implicitly~\citep{mohamed2016learning} since the only requirement from the original variational posterior is that it should permit efficient sampling. Hence, our soft accept-reject test  provides a knob to smoothly interpolate between plain importance sampling with a fixed variational posterior (no rejections) to obtaining samples from the exact posterior in the limit (with potentially high rejection rate), thereby trading off statistical accuracy for computational cost. Further, even though the resampled proposal is unnormalized due to the introduction of an accept-reject test, we can surprisingly derive unbiased gradient estimates with respect to the model parameters that only require the unnormalized density estimates of the resampled proposal, 
 leading to an efficient learning algorithm.
 
Empirically, we demonstrate that variational rejection sampling outperforms competing single-sample and multi-sample approaches by $3.71$ nats and $0.21$ nats respectively on average for estimating marginal log-likelihoods using sigmoid belief networks on the MNIST dataset.

\section{BACKGROUND}
\label{sec:prelim}

In this section, we present the setup for stochastic optimization of expectations of arbitrary functions with respect to parameterized distributions. We also discuss prior work applicable in the context of variational learning. We use upper-case symbols to denote probability distributions and assume they admit densities on a suitable reference measure, denoted by the corresponding lower-case notation. 

Consider the following objective:
\begin{align}\label{eq:stoc_opt_obj}
L(\theta, \phi) &= \mathbb{E}_{\mathbf{z}\sim Q_\phi}\left[f_{\theta, \phi}(\mathbf{z})\right]
\end{align}
where $\theta$ and $\phi$ denote sets of parameters and $Q_\phi$ is a parameterized sampling distribution over $\mathbf{z}$ which can be discrete or continuous. We will assume that sampling $\mathbf{z}$ from $Q_\phi$ is efficient,
and suppress subscript notation in expectations from $\mathbf{z}\sim Q_\phi$ to simply $Q$ wherever the context is clear. We are interested in optimizing the expectation of a function $f_{\theta, \phi}$ with respect to the sampling distribution $Q_\phi$ using gradient methods. In general, $f_{\theta, \phi}$ and the density $q_\phi$ need not be differentiable with respect to $\theta$ and $\phi$.

Such objectives are intractable to even evaluate in general, but unbiased estimates can be obtained efficiently using Monte Carlo techniques. The gradients of the objective with respect to $\theta$ are given by:
\begin{align*}
\nabla_\theta L(\theta, \phi) &= \mathbb{E}_{Q}  \left [ \nabla_\theta {f_{\theta, \phi}(\mathbf{z})}\right].
\end{align*}
As long as $f_{\theta, \phi}$ is differentiable with respect to $\theta$, we can compute unbiased estimates of the gradients using Monte Carlo. There are two primary class of estimators for computing gradients  with respect to $\phi$ which we discuss next.

\paragraph{Score function estimators.}
Using the fact that $\nabla_\phi q_\phi = q_\phi \nabla_\phi \log q_\phi$, the gradients with respect to $\phi$ can be expressed as:
\begin{align*}
\nabla_\phi L(\theta, \phi) &= \mathbb{E}_{Q}  \left [ \nabla_\phi {f_{\theta, \phi}(\mathbf{z})}\right] + \mathbb{E}_{Q} \left [  f_{\theta, \phi}(\mathbf{z}) \nabla_\phi {\log q_\phi(\mathbf{z})} \right]. 
\end{align*}
The first term can be efficiently estimated using Monte Carlo if $f_{\theta, \phi}$ is differentiable with respect to $\phi$. The second term, referred to as the \textit{score function estimator} or the likelihood-ratio estimator or REINFORCE by different authors~\citep{fu2006gradient,glynn1990likelihood,williams1992simple}, requires gradients with respect to the log density of the sampling distribution and can suffer from large variance~\citep{glasserman2013monte,schulman2015gradient}.
Hence, these estimators are used in conjunction with control variates (also referred to as baselines). A control variate, $c$, is any constant or random variable (could even be a function of $\mathbf{z}$ if we can correct for its bias) 
positively correlated with $f_{\theta, \phi}$ that reduces the variance of the estimator without introducing any bias: 
\begin{align*}
\mathbb{E}_{Q} \left [  f_{\theta, \phi}(\mathbf{z}) \nabla_\phi {\log q_\phi(\mathbf{z})}\right] &= \mathbb{E}_{Q} \left [ ( f_{\theta, \phi}(\mathbf{z}) -c ) \nabla_\phi {\log q_\phi(\mathbf{z})}\right]. 
\end{align*}

\paragraph{Reparameterization estimators.}
Many continuous distributions can be \textit{reparameterized} such that it is possible to obtain samples from the original distribution by applying a deterministic transformation to a sample from a fixed distribution~\citep{kingma-iclr2014,rezende-icml2014,titsias2014doubly}. For instance, if the sampling distribution is an isotropic Gaussian, $Q_\phi=\mathcal{N}(\boldsymbol{\mu}, \sigma^2 \mathbf{I})$, then a sample $\mathbf{z}\sim Q_\phi$ can be equivalently obtained by sampling $\boldsymbol{\epsilon}\sim \mathcal{N}(\mathbf{0},\mathbf{I})$ and passing 
through a deterministic function, $\mathbf{z} = g_{\boldsymbol{\mu}, \sigma}(\boldsymbol{\epsilon}) = \boldsymbol{\mu} + \sigma\boldsymbol{\epsilon}$. This allows exchanging the gradient and expectation, giving a gradient with respect to $\phi$ after reparameterization as: 
\begin{align*}
\nabla_\phi L(\theta, \phi) &= \mathbb{E}_{\boldsymbol{\epsilon} \sim S}\left[ \nabla_{\mathbf{z}} f_{\theta, \phi}(\mathbf{z}) \nabla_\phi g_{\phi}(\boldsymbol{\epsilon})\right]
\end{align*}
where $S$ is a fixed sampling distribution and $\mathbf{z} = g_{\phi}(\boldsymbol{\epsilon})$ is a deterministic transformation. Reparameterized gradient estimators typically have lower variance 
but are not widely applicable since they require $g_{\phi}$ and $f_{\theta, \phi}$ to be differentiable with respect to $\phi$ and $\mathbf{z}$ respectively unlike score function estimators~\citep{glasserman2013monte,schulman2015gradient}. Recent work has tried to bridge this gap by reparameterizing continuous relaxations to discrete distributions (called Concrete distributions) that gives low variance, but biased gradient estimates~\citep{maddison2016concrete,jang2016categorical} and deriving gradient estimators that interpolate between score function estimators and reparameterization estimators for distributions that can be simulated using acceptance-rejection algorithms, such as the Gamma and Dirichlet distributions~\citep{ruiz2016generalized,naesseth2017reparameterization}. Further reductions in the variance of reparameterization estimators is possible as explored in recent work, potentially introducing bias~\citep{roeder2017sticking,miller2017reducing,levy2017deterministic}.

\subsection{Variational learning}
We can cast variational learning as an objective of the form given in Eq.~\eqref{eq:stoc_opt_obj}. Consider a generative model that specifies a joint distribution $p_{\theta}(\mathbf{x}, \mathbf{z})$ 
over the observed variables $\mathbf{x}$ and latent variables $\mathbf{z}$ respectively, parameterized by $\theta$.
We assume the true posterior $p_{\theta}(\mathbf{z}\vert\mathbf{x})$ over the latent variables is intractable, and we introduce a variational approximation to the posterior $q_\phi(\mathbf{z} \vert \mathbf{x})$  represented by a recognition network and parameterized by $\phi$.
The parameters of the generative model and the recognition network are learned jointly~\citep{kingma-iclr2014,rezende-icml2014} by optimizing an evidence lower bound (ELBO) on the marginal log-likelihood of a datapoint $\mathbf{x}$:
\begin{align}\label{eq:elbo}
\log p_\theta(\mathbf{x})
&\geq \mathbb{E}_{Q}\left[\log \frac{p_\theta(\mathbf{x},\mathbf{z})}{q_\phi(\mathbf{z} \vert \mathbf{x})}\right]
&\triangleq ELBO(\theta, \phi).
\end{align}

Besides reparameterization estimators that were introduced in the context of variational learning, there has been considerable research in the design of control variates (CV) for variational learning using the more broadly applicable score function estimators~\citep{paisley2012variational}. In particular, \cite{wingate2013automated} and \cite{ranganath-aistats2013} use simple scalar CV, NVIL proposed input-dependent CV~\citep{mnih-icml2014}, and MuProp combines input-dependent CV with deterministic first-order Taylor approximations to the mean-field approximation of the model~\citep{gu-iclr1016}. Recently, REBAR used CV based on the Concrete distribution to give low variance, unbiased gradient updates~\citep{tucker2017rebar}, which has been subsequently generalized to a more flexible parametric version in RELAX~\citep{Grathwohletal2018relax}.

In a parallel line of work, there is an increasing effort to learn models with more expressive posteriors. Major research in this direction focuses on continuous latent variable models, for \textit{e.g.}, see \cite{gregor-icml2014,gregor2015draw,salimans2015markov,rezende2015variational,chen2016variational,song2017nice,grover2018boosted} and the references therein. Closely related to the current work is \citet{gummadi-nipsaabi2014}, which originally proposed a resampling scheme to improve the richness of the posterior approximation and derived unbiased estimates of gradients for the KL divergence from arbitrary unnormalized posterior approximations. Related work for discrete latent variable models is scarce. Hierarchical models impose a prior over the discrete latent variables to induce dependencies between the variables~\citep{ranganath2016hierarchical}, which can also be specified as an undirected model~\citep{kuleshov2017neural}. 
On the theoretical side, random projections of discrete posteriors have been shown to provide tight bounds on the quality of the variational approximation~\citep{zhu2015hybrid,grover2016variational,hsu2016tight}.

\paragraph{Multi-sample estimators.} Multi-sample objectives improve the family of distributions represented by variational posteriors by trading off computational efficiency with statistical accuracy. Learning algorithms based on these objectives do not introduce additional parameters but instead draw multiple samples from the variational posterior to reduce the variance in gradient estimates as well as tighten the ELBO. A multi-sample ELBO is given as:
\begin{align}\label{eq:multi_sample_elbo}
\log p_\theta(\mathbf{x})
&\geq \mathbb{E}_{\mathbf{z}_1, \dots, \mathbf{z}_k \sim Q_\phi}\left[\log \frac{1}{k}\sum_{i=1}^k\frac{p_\theta(\mathbf{x},\mathbf{z}_i)}{q_\phi(\mathbf{z}_i \vert \mathbf{x})}\right]
\end{align}

Biased gradient estimators using similar objectives were first used by \cite{raiko2014techniques} for structured prediction. \citet{burda-iclr2016} showed that a multi-sample ELBO is a tighter lower bound on the log-likelihood than the ELBO. Further, they derived unbiased gradient estimates for optimizing variational autoencoders trained using the objective in Eq.~\eqref{eq:multi_sample_elbo}. VIMCO generalized this to discrete latent variable models with arbitrary Monte Carlo objectives using a score function estimator with per-sample control variates~\citep{mnih-icml2016}, which serves as a point of comparison in our experiments.
Recently,~\citet{naesseth2017variational} proposed an importance weighted multi-sample objective for 
probabilistic models of dynamical systems based on sequential Monte Carlo.

\section{THE VRS FRAMEWORK}
\label{sec:model}

\begin{figure*}[t]
\centering
\begin{subfigure}{0.18\textwidth}
        \centering
     \includegraphics[width=\columnwidth]{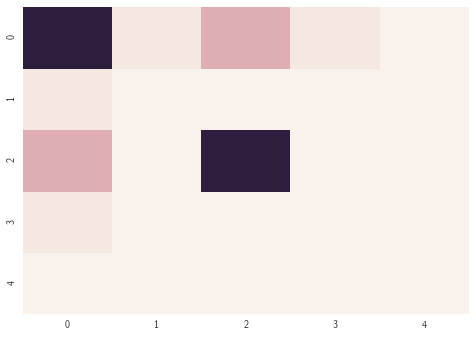} 
        \caption{Target dist.}
    \end{subfigure}
~
\begin{subfigure}{0.18\textwidth}
        \centering
        \includegraphics[width=\columnwidth]{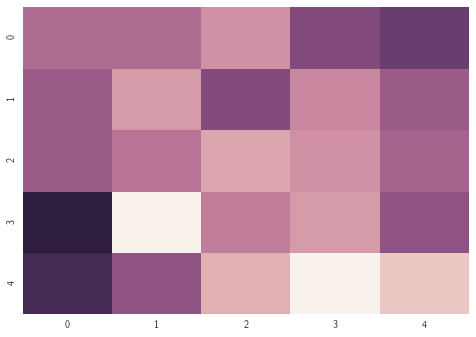} 
        \caption{$T, a, \mathtt{KL}: \infty, 1, 18$}
    \end{subfigure}
~
\begin{subfigure}{0.18\textwidth}
        \centering
        \includegraphics[width=\textwidth]{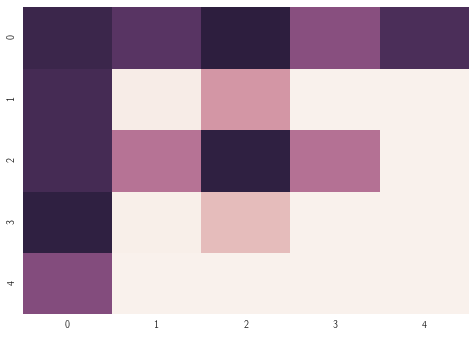} 
        \caption{$10, 0.5, 3.1$}
    \end{subfigure}
  ~
\begin{subfigure}{0.18\textwidth}
        \centering
       \includegraphics[width=\textwidth]{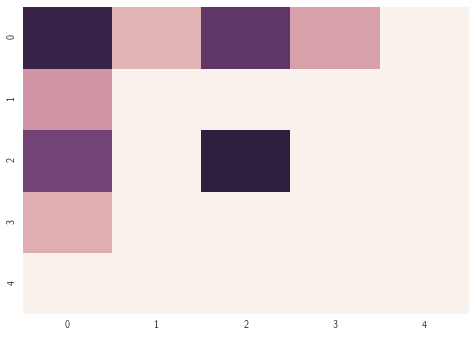} 
        \caption{$0, 0.2, 0.3$}
    \end{subfigure}
~
\begin{subfigure}{0.18\textwidth}
        \centering
        \includegraphics[width=\textwidth]{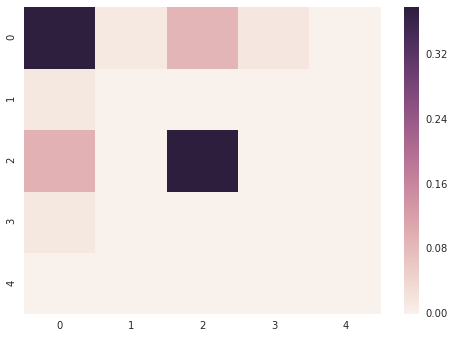} 
        \caption{$-5, 0.01, $1e-3}
    \end{subfigure}    
    \caption{The resampled posterior approximation (b-e) gets closer (in terms of $\mathtt{KL}$ divergence) to a target 2D discrete distribution (a) as we decrease the parameter $T$, which controls the acceptance probability $a$. The triples shown are $T, a, \mathtt{KL}$ divergence to target.}\label{fig:toy_1}
\end{figure*}

To motivate variational rejection sampling (VRS),  consider the ELBO objective in Eq.~\eqref{eq:elbo} for any fixed $\theta$ and $\mathbf{x}$. This is maximized when the variational posterior matches the true posterior $p_\theta(\mathbf{z} \vert \mathbf{x})$. However, in practice, the approximate posterior could be arbitrarily far from the true posterior which we seek to mitigate by rejection sampling.

\subsection{The Resampled ELBO (R-ELBO)}
Consider an alternate sampling distribution for the 
variational posterior with the density defined below:
\begin{align}\label{eq:resampled_proposal}
r_{\theta, \phi}(\mathbf{z} \vert \mathbf{x}, T) \propto q_\phi (\mathbf{z} \vert \mathbf{x}) a_{\theta, \phi}(\mathbf{z} \vert \mathbf{x}, T) 
\end{align}
where $a_{\theta, \phi}(\mathbf{z} \vert \mathbf{x}, T) \in (0, 1]$ is an acceptance probability function that could depend on $\theta, \phi$, and additional parameter(s) $T$. Unlike $p, q$, and $r$, note that $a_{\theta, \phi}(\mathbf{z} \vert \mathbf{x}, T)$ does not represent a density over the latent variables $\mathbf{z}$, but simply a function that maps each possible $\mathbf{z}$ to a number between 0-1 and hence, it denotes the probability  of acceptance for each $\mathbf{z}$.
In order to sample from $R_{\theta, \phi}$, we follow a two step sampling procedure defined in Algorithm~\ref{alg:sample}. Hence, computing Monte Carlo expectations with respect to the modified proposal involves resampling from the original proposal due to an additional accept-reject step. We refer to such sampling distributions as \textit{resampled} proposal distributions.

\begin{algorithm}
\caption{Sampler for 
$R_{\theta, \phi}(\mathbf{z} \vert \mathbf{x}, T)$}\label{alg:sample}
\begin{algorithmic}[1]
\Statex \textbf{input} $a_{\theta, \phi}(\mathbf{z} \vert \mathbf{x}, T)$, $Q_\phi(\mathbf{z}\vert\mathbf{x})$
\Statex \textbf{output} $\mathbf{z} \sim R_{\theta, \phi}(\mathbf{z} \vert \mathbf{x}, T)$
\While{True}
\State $\mathbf{z} \gets$ sample from proposal $Q_\phi(\mathbf{z}\vert\mathbf{x})$. 
\State Compute acceptance probability $a_{\theta, \phi}(\mathbf{z} \vert \mathbf{x}, T)$  
\State Sample uniform: $u \sim U[0,1]$.
\If {$u < a_{\theta, \phi}(\mathbf{z} \vert \mathbf{x}, T)$} 
\State Output sample $\mathbf{z}$.
\EndIf
\EndWhile
\end{algorithmic}
\end{algorithm}

The resampled proposal defines a new evidence lower bound on the marginal log-likelihood of
$\mathbf{x}$, which we refer to as the ``resampled ELBO", or R-ELBO:
\begin{align}\label{eq:resampled_elbo}
\log p_\theta(\mathbf{x}) &\ge   \expect_{R} \left[ \log \frac{p_\theta(\mathbf{x}, \mathbf{z}) Z_R(\mathbf{x}, T)} {q_\phi(\mathbf{z}\vert\mathbf{x})  a_{\theta, \phi}(\mathbf{z} \vert \mathbf{x}, T)} \right] \nonumber \\
&\triangleq \text{R-ELBO} (\theta, \phi).
\end{align}
where $Z_R(\mathbf{x}, T) = \mathbb{E}_{Q}[a_{\theta, \phi}(\mathbf{z} \vert \mathbf{x}, T)]$ is the (generally intractable) normalization constant for the resampled proposal distribution. To make the resampling framework described above work, we need to define a suitable acceptance function and derive low variance Monte Carlo gradient estimators with respect to $\theta $ and $\phi$ for the R-ELBO which we discuss next.

\subsection{Acceptance probability functions}

The general intuition behind designing an acceptance probability function is that it should allow for the resampled posterior to come ``close" to the target posterior $p_\theta(\mathbf{z} \vert\mathbf{x})$ (possibly at the cost of extra computation).
While there could be many possible ways of designing such acceptance probability functions, we draw inspiration from rejection sampling~\citep{halton1970retrospective}. 

In order to draw samples from a target distribution $\mathcal{T}(\mathbf{z})$, a rejection sampler first draws samples from an easy-to-sample distribution $\mathbf{z} \sim \mathcal{S}(\mathbf{z})$ with a larger-or-equal support, \textit{i.e.}, $ s(\mathbf{z}) > 0$ wherever $t(\mathbf{z}) > 0$. Then, provided we have a fixed, finite upper bound $M \in [1,\infty)$ on the likelihood ratio $\nicefrac{t(\mathbf{z})}{s(\mathbf{z})}$, we can obtain samples from the target by accepting samples from $s(\mathbf{z})$ with a probability $\frac{t(\mathbf{z})}{M s(\mathbf{z})}$. The choice of $M$ guarantees that the acceptance probability is less than or equal to 1, and overall probability of any accepted sample $\mathbf{z}$ is proportional to $\tfrac{t(\mathbf{z})}{M s(\mathbf{z})} s(\mathbf{z})$ which gives us $\mathbf{z}\sim \mathcal{T}(\mathbf{z})$ as desired. The constant $M$ has to be large enough such that the acceptance probability does not exceed $1$, but a very high value of $M$ leads to an increase in computation due to a higher rejection rate.

If the target is only known up to a normalization constant, then rejection sampling can be used provided $M$ is large enough to ensure that the acceptance probability never exceeds 1. However, we do not know in general how large $M$ should be and even if we did, it would be computationally infeasible to actually use it in a practical algorithm. A natural approximation that departs from the typical rejection sampler would be to accept proposed samples with probability  $\min\left[1, \frac{t(\mathbf{z})}{M s(\mathbf{z})}\right]$ for some $M$ that is no longer guaranteed to dominate the likelihood ratios across the entire state space. In the setting of variational learning, the target corresponds to the true, but intractable posterior that can be specified up to a normalization constant as $p_\theta(\mathbf{z} \vert \mathbf{x}) \propto p_\theta(\mathbf{x}, \mathbf{z})$ for any fixed $\theta$ and $\mathbf{x}$. 
If $Q_\phi( \mathbf{z} \vert \mathbf{x})$ denotes the proposal distribution and $M(T)$ is any function of the threshold parameter $T$,
the acceptance probability for the approximate rejection sampler is given by:
\begin{align*}
a_{\theta, \phi}(\mathbf{z} \vert \mathbf{x}, T) = \min\left[1, \frac{p_\theta(\mathbf{x}, \mathbf{z})}{M(T) q_\phi( \mathbf{z} \vert \mathbf{x})}\right].
\end{align*}
To get a fully differentiable approximation to the min operator, we consider:
\begin{align*}
a_{\theta, \phi}(\mathbf{z}\vert \mathbf{x}, T) 
&= 1/\max\left[1, \frac{M(T) q_\phi( \mathbf{z} \vert \mathbf{x})}{p_\theta(\mathbf{x}, \mathbf{z})}\right] \\
&\approx 1/\left[1^t + \left(\frac{M(T) q_\phi( \mathbf{z} \vert \mathbf{x})}{p_\theta(\mathbf{x}, \mathbf{z})}\right)^t\right]^{\nicefrac{1}{t}} 
\end{align*}
where the approximation in the last step holds for large $t$ or when any of the two terms in the $\max$ expression dominates the other. For $t=1$, we get the exponentiated negative softplus function  which we will use to be the acceptance probability function in the remainder of this paper. We leave other approximations to future work. 
Letting $T = -\log M$, the log probability of acceptance is parameterized as: 
\begin{align}\label{eq:softplus_accept_prob}
\log a_{\theta, \phi}(\mathbf{z}\vert \mathbf{x}, T) &= - \log [1+ \exp(l_{\theta, \phi}(\mathbf{z} \vert \mathbf{x}, T))] \nonumber \\
&= -\left[ l_{\theta, \phi}(\mathbf{z} \vert \mathbf{x}, T) \right]^+ 
\end{align}

where $l_{\theta, \phi}(\mathbf{z} \vert \mathbf{x}, T) = - \log p_\theta(\mathbf{x}, \mathbf{z}) + \log q_\phi(\mathbf{z} \vert \mathbf{x}) - T$ and $[*]^+$ denotes the softplus function, \textit{i.e.}, $\log(1 + e^*)$. 

Informally, the resampling scheme of Algorithm~\ref{alg:sample} with the choice of acceptance probability function as in Eq.~\eqref{eq:softplus_accept_prob} enforces the following behavior: samples from the approximate posterior that disagree (as measured by the log-likelihoods) with the target posterior beyond a level implied by the corresponding threshold $T$ have an exponentially decaying probability of getting accepted, while leaving the remaining samples with negligible interference from resampling. 

When the proposed sample $\mathbf{z}$ from $Q_{\phi}$ is assigned a small likelihood by $p_\theta$, the random variable $l_{\theta, \phi}(\mathbf{z} \vert \mathbf{x}, T)$ is correspondingly large with high probability (and linear in the negative log-likelihood assigned by $p_\theta$), resulting in a low acceptance probability. Conversely, when $p_\theta$ assigns a high likelihood to $\mathbf{z}$, we get a higher acceptance probability. Furthermore, a large value of the scalar bias $T$ results in an acceptance probability of $1$, recovering the regular variational inference setting as a special case. On the other extreme, for a small value of $T$, we get the behavior of a rejection sampler with high computational cost that is also close to the target distribution in $\mathtt{KL}$  divergence. More formally, we have Theorem~\ref{prop:monotone} which shows that the $\mathtt{KL}$ divergence can be improved monotonically by decreasing $T$. 
However, a smaller value of $T$ would require more aggressive rejections and thereby, more computation. 

\begin{theorem}\label{prop:monotone}
For fixed $\theta, \phi$, the $\mathtt{KL}$ divergence between the approximate and true posteriors, $\mathtt{KL}(R_{\theta, \phi}(\mathbf{z} \vert \mathbf{x}, T) \Vert P_\theta(\mathbf{z} \vert \mathbf{x}))$ is monotonically increasing in $T$ where $R_{\theta, \phi}(\mathbf{z} \vert \mathbf{x}, T)$ is the resampled proposal distribution with the choice of acceptance probability function in Eq.~\eqref{eq:softplus_accept_prob}. Furthermore, the behavior of the sampler in Algorithm~\ref{alg:sample}  interpolates between the following two extremes:
\begin{itemize}
\item As $T \rightarrow +\infty$,  $R_{\theta, \phi}(\mathbf{z}\vert\mathbf{x}, T)$ is equivalent to $Q_\phi(\mathbf{z}\vert\mathbf{x})$, with perfect sampling efficiency for the accept-reject step \textit{i.e}., $a_{\theta, \phi}(\mathbf{z}\vert\mathbf{x}, T)\rightarrow 1$.
\item As $T \rightarrow -\infty$, $R_{\theta, \phi}(\mathbf{z}\vert\mathbf{x}, T)$ is equivalent to $P_\theta(\mathbf{z}\vert\mathbf{x})$, with the sampling efficiency of  a plain rejection sampler \textit{i.e.}, $a_{\theta, \phi}(\mathbf{z}\vert\mathbf{x}, T)\rightarrow 0 ~\forall\mathbf{z}$.
\end{itemize} 
\end{theorem}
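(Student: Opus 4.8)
The plan is to reparametrize the whole construction in terms of the per-sample importance weight $w(\mathbf{z}) = p_\theta(\mathbf{x},\mathbf{z})/q_\phi(\mathbf{z}\vert\mathbf{x})$ and the rejection level $M = e^{-T}$. With $T = -\log M$ as in Eq.~\eqref{eq:softplus_accept_prob}, the acceptance function becomes $a_{\theta,\phi}(\mathbf{z}\vert\mathbf{x},T) = w(\mathbf{z})/\bigl(w(\mathbf{z})+M\bigr)$, and using $q_\phi(\mathbf{z}\vert\mathbf{x})\,w(\mathbf{z}) = p_\theta(\mathbf{x},\mathbf{z})$ and $p_\theta(\mathbf{z}\vert\mathbf{x}) = p_\theta(\mathbf{x},\mathbf{z})/p_\theta(\mathbf{x})$ one gets the key identity
\[
\log \frac{r_{\theta, \phi}(\mathbf{z}\vert\mathbf{x},T)}{p_\theta(\mathbf{z}\vert\mathbf{x})} = \log p_\theta(\mathbf{x}) - \log Z_R(\mathbf{x},T) - \log\bigl(w(\mathbf{z}) + M\bigr).
\]
In particular, both $1 - a_{\theta,\phi}(\mathbf{z}\vert\mathbf{x},T) = M/\bigl(w(\mathbf{z})+M\bigr)$ and $\log\bigl(r_{\theta,\phi}(\mathbf{z}\vert\mathbf{x},T)/p_\theta(\mathbf{z}\vert\mathbf{x})\bigr)$ are monotone non-increasing functions of the single scalar quantity $w(\mathbf{z})$.

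For the monotonicity statement I would differentiate $K(T) \triangleq \mathtt{KL}\bigl(R_{\theta,\phi}(\mathbf{z}\vert\mathbf{x},T)\,\Vert\,P_\theta(\mathbf{z}\vert\mathbf{x})\bigr) = \int r_{\theta,\phi}\log(r_{\theta,\phi}/p_\theta)$ with respect to $T$. The derivative splits into the term where $\partial_T$ acts on the integrand $\log(r_{\theta,\phi}/p_\theta)$ and the term where it acts on the measure $r_{\theta,\phi}$. The first term equals $\mathbb{E}_{R}[\partial_T\log r_{\theta,\phi}]$ and vanishes, because $\partial_T\log r_{\theta,\phi}$ is the score of a normalized family: using $\partial_T\log a_{\theta,\phi} = 1 - a_{\theta,\phi}$ and hence $\partial_T\log Z_R = \mathbb{E}_{R}[1 - a_{\theta,\phi}]$, we get $\partial_T\log r_{\theta,\phi}(\mathbf{z}) = (1 - a_{\theta,\phi}(\mathbf{z})) - \mathbb{E}_{R}[1 - a_{\theta,\phi}]$, which has mean zero under $R$. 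The surviving term rearranges into a covariance,
\[
\frac{\partial}{\partial T}\,K(T) = \mathrm{Cov}_{R}\bigl(1 - a_{\theta,\phi}(\mathbf{z}\vert\mathbf{x},T),\ \log\tfrac{r_{\theta,\phi}(\mathbf{z}\vert\mathbf{x},T)}{p_\theta(\mathbf{z}\vert\mathbf{x})}\bigr),
\]
and since both arguments are non-increasing functions of the common random variable $w(\mathbf{z})$, Chebyshev's association (correlation) inequality gives that this covariance is non-negative. Hence $K$ is non-decreasing in $T$.

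For the two limiting regimes I would work directly with the densities. As $T\to+\infty$ we have $M\to 0$, so $a_{\theta,\phi}(\mathbf{z}\vert\mathbf{x},T) = w(\mathbf{z})/(w(\mathbf{z})+M)\to 1$ pointwise; since $0 < a_{\theta,\phi}\le 1$, dominated convergence gives $Z_R(\mathbf{x},T)\to 1$ and therefore $r_{\theta,\phi}(\mathbf{z}\vert\mathbf{x},T)\to q_\phi(\mathbf{z}\vert\mathbf{x})$, i.e. $R_{\theta,\phi}\to Q_\phi$ with acceptance probability tending to $1$. As $T\to-\infty$ we have $M\to\infty$; multiplying numerator and denominator of $r_{\theta,\phi} = q_\phi a_{\theta,\phi}/Z_R$ by $M$ and using $w(\mathbf{z})M/(w(\mathbf{z})+M)\to w(\mathbf{z})$ together with the integrable dominating function $q_\phi(\mathbf{z}\vert\mathbf{x})\,w(\mathbf{z})$ (whose integral is $p_\theta(\mathbf{x})$), dominated convergence yields $r_{\theta,\phi}(\mathbf{z}\vert\mathbf{x},T)\to q_\phi(\mathbf{z}\vert\mathbf{x})\,w(\mathbf{z})/p_\theta(\mathbf{x}) = p_\theta(\mathbf{z}\vert\mathbf{x})$, while $a_{\theta,\phi}(\mathbf{z}\vert\mathbf{x},T)\to 0$ for every $\mathbf{z}$ --- the behavior of a plain rejection sampler.

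The main obstacle I anticipate is not the algebra but the analytic bookkeeping: justifying the interchange of $\partial_T$ and $\int$ in the monotonicity step (a local dominated-convergence condition on $r_{\theta,\phi}$ and $\partial_T r_{\theta,\phi}$, which presumes $K(T)$ is finite so that the derivative is well defined), and checking that $\log(r_{\theta,\phi}/p_\theta)$ is $R$-integrable so the covariance rearrangement is valid. Everything else --- the sigmoid identity $\partial_T\log a_{\theta,\phi} = 1 - a_{\theta,\phi}$, the cancellation of the score term, and the sign of the Chebyshev covariance --- is routine once the reparametrization in terms of $w(\mathbf{z})$ and $M$ is in place.
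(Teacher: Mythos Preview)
Your proof is correct and follows essentially the same route as the paper's: compute $\partial_T\mathtt{KL}$ as a covariance under $R$ and conclude non-negativity because the two arguments are monotone transformations of a single scalar. The only cosmetic difference is that the paper reaches the covariance by invoking its general gradient lemma (Lemma~\ref{prop:kld}) and writes it as $\mathtt{COV}_R\bigl(\log\sigma(l_{\theta,\phi}),\,\sigma(l_{\theta,\phi})\bigr)$, whereas your $(w,M)$ reparametrization and direct differentiation yield the equivalent $\mathrm{Cov}_R\bigl(1-a_{\theta,\phi},\,\log(r_{\theta,\phi}/p_\theta)\bigr)$ and appeal to Chebyshev explicitly; the limiting-regime arguments also match, with your dominated-convergence justification being a bit more careful than the paper's.
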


This phenomenon is illustrated in Figure~\ref{fig:toy_1} where we approximate an example 2D discrete target distribution on a $5 \times 5$ grid, with a uniform proposal distribution plus resampling. With no resampling ($T=\infty$), the approximation is far from the target. As $T$ is reduced, Figure~\ref{fig:toy_1} demonstrates progressive improvement in the posterior quality both visually as well as via an estimate of the $\mathtt{KL}$ divergence from approximation to the target along with an increasing computation cost reflected in the lower acceptance probabilities. In summary, we can express the R-ELBO as:
\begin{equation}
\label{eq:relbo2}
\text{R-ELBO}(\theta, \phi) =
\log p_\theta(\mathbf{x}) - \mathtt{KL}(R_{\theta, \phi}(\mathbf{z} \vert \mathbf{x}, T) \Vert P_\theta(\mathbf{z} \vert \mathbf{x})).
\end{equation}

Theorem~\ref{prop:monotone} and Eq.~\eqref{eq:relbo2} give the following corollary.

\begin{corollary}\label{cor:relbo}
The R-ELBO gets tighter by decreasing $T$ but more expensive to compute. 
\end{corollary}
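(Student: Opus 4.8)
The plan is to read the statement as two separate monotonicity-in-$T$ claims and dispatch each using results already in hand.

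For the ``tighter'' half, I would start from the decomposition in Eq.~\eqref{eq:relbo2}, namely $\text{R-ELBO}(\theta, \phi) = \log p_\theta(\mathbf{x}) - \mathtt{KL}(R_{\theta, \phi}(\mathbf{z} \vert \mathbf{x}, T) \Vert P_\theta(\mathbf{z} \vert \mathbf{x}))$. The term $\log p_\theta(\mathbf{x})$ is the marginal log-likelihood and carries no dependence on $T$, so the gap between the target and the bound, $\log p_\theta(\mathbf{x}) - \text{R-ELBO}(\theta, \phi)$, is exactly the $\mathtt{KL}$ divergence between the resampled and true posteriors. Theorem~\ref{prop:monotone} asserts that this $\mathtt{KL}$ divergence is monotonically increasing in $T$; hence the gap shrinks as $T$ decreases, i.e., the R-ELBO increases monotonically toward $\log p_\theta(\mathbf{x})$ (attaining it in the limit $T \to -\infty$). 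This is the whole argument for the first half, and it is immediate once Theorem~\ref{prop:monotone} and Eq.~\eqref{eq:relbo2} are invoked.

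For the ``more expensive to compute'' half, I would first make precise what ``expense'' means for the sampler in Algorithm~\ref{alg:sample}: each pass of the while loop draws one proposal from $Q_\phi$ and accepts it with probability $a_{\theta, \phi}(\mathbf{z} \vert \mathbf{x}, T)$, so the number of iterations until acceptance is geometric with success probability $Z_R(\mathbf{x}, T) = \mathbb{E}_{Q}[a_{\theta, \phi}(\mathbf{z} \vert \mathbf{x}, T)]$, and the expected number of proposal evaluations per accepted sample is $1/Z_R(\mathbf{x}, T)$. It then suffices to show that $Z_R(\mathbf{x}, T)$ is nondecreasing in $T$. From Eq.~\eqref{eq:softplus_accept_prob}, $l_{\theta, \phi}(\mathbf{z} \vert \mathbf{x}, T) = -\log p_\theta(\mathbf{x}, \mathbf{z}) + \log q_\phi(\mathbf{z} \vert \mathbf{x}) - T$ is affine and strictly decreasing in $T$ for every fixed $\mathbf{z}$, and the softplus $[\,\cdot\,]^+$ is increasing, so $\log a_{\theta, \phi}(\mathbf{z} \vert \mathbf{x}, T) = -[l_{\theta, \phi}(\mathbf{z} \vert \mathbf{x}, T)]^+$ is nondecreasing in $T$ pointwise in $\mathbf{z}$; exponentiating, $a_{\theta, \phi}(\mathbf{z} \vert \mathbf{x}, T)$ is nondecreasing in $T$ pointwise, and integrating against $Q_\phi$ preserves this, giving monotonicity of $Z_R(\mathbf{x}, T)$. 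Consequently $1/Z_R(\mathbf{x}, T)$, the expected cost of producing one accepted sample, is nonincreasing in $T$, i.e., it grows as $T$ is decreased.

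I do not expect a genuine obstacle here; the corollary is essentially a repackaging of Theorem~\ref{prop:monotone}. The only points requiring a little care are in the second half: one must (i) commit to a concrete cost model (expected number of while-loop iterations, equivalently $1/Z_R(\mathbf{x}, T)$) rather than leave ``expensive'' informal, and (ii) establish the pointwise-in-$\mathbf{z}$ monotonicity of the acceptance probability before passing to the expectation — both are short. To tie this back to the limiting regimes in Theorem~\ref{prop:monotone}, one can additionally note that $Z_R(\mathbf{x}, T) \to 1$ as $T \to +\infty$ (perfect efficiency, $a_{\theta, \phi} \equiv 1$) while $Z_R(\mathbf{x}, T) \to 0$ as $T \to -\infty$ (plain-rejection-sampler efficiency), so the expected cost interpolates from $1$ up to $\infty$ exactly as the bound tightens from the ELBO to $\log p_\theta(\mathbf{x})$.
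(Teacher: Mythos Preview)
Your proposal is correct and matches the paper's approach exactly: the paper states that the corollary follows immediately from Theorem~\ref{prop:monotone} and Eq.~\eqref{eq:relbo2}, and your first half does precisely that. For the second half you are actually more careful than the paper, which never formally proves that the expected sampling cost is monotone in $T$ (Theorem~\ref{prop:monotone} only asserts the limiting efficiencies at $T\to\pm\infty$); your pointwise monotonicity argument for $a_{\theta,\phi}(\mathbf{z}\vert\mathbf{x},T)$ and hence for $Z_R(\mathbf{x},T)$ cleanly fills that in.
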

With an appropriate acceptance probability function, we can therefore traverse the computational-statistical trade off for maximum likelihood estimation by adaptively tuning the threshold $T$ based on 
available computation. 

\subsection{Gradient estimation}
The resampled proposal distribution in Eq.~\eqref{eq:resampled_proposal} is unnormalized with an intractable normalization constant, $Z_R(\mathbf{x}, T) = \mathbb{E}_{Q}\left[  a_{\theta, \phi}(\mathbf{z}\vert\mathbf{x}, T)\right]$.
The presence of an intractable normalization constant seems challenging for both evaluation and stochastic optimization of the R-ELBO. 
Even though the constant cannot be computed in closed form,\footnote{Note that Monte Carlo estimates of the partition function can be obtained efficiently for evaluation.} 
we can nevertheless compute Monte Carlo estimates of its gradients, as we show in Lemma~\ref{prop:kld} in the appendix.
The resulting R-ELBO gradients are summarized below in Theorem~\ref{prop:gradients}.

\begin{theorem}\label{prop:gradients}
Let $\mathtt{COV}_R(A(\mathbf{z}), B(\mathbf{z}))$ denote the covariance of the two random variables $A(\mathbf{z})$ and $B(\mathbf{z})$, where $\mathbf{z}\sim R_{\theta, \phi}$. Then:

\begin{itemize}
\item 
The R-ELBO gradients with respect to $\theta$:
{\small
\begin{align*}
\nabla_{\phi} \text{R-ELBO}(\theta, \phi) &= \mathtt{COV}_R \left( A_{\theta, \phi}(\mathbf{z} \vert \mathbf{x}, T),  B_{\theta, \phi}(\mathbf{z} \vert \mathbf{x}, T) \right)
\end{align*}
}%
where the covariance is between the following r.v.:
{\small
\begin{align*}
A_{\theta, \phi}(\mathbf{z}\vert\mathbf{x}, T) &\triangleq \log p_\theta(\mathbf{x}, \mathbf{z}) - \log q_\phi(\mathbf{z} \vert \mathbf{x}) - [l_{\theta, \phi}(\mathbf{z} \vert \mathbf{x}, T)]^+\\
B_{\theta, \phi}(\mathbf{z}\vert\mathbf{x}, T) &\triangleq \left( 1 - \sigma(l_{\theta, \phi}(\mathbf{z} \vert \mathbf{x}, T)) \right) \nabla_{\phi} \log q_{\phi}(\mathbf{z} \vert\mathbf{x}) ).
\end{align*}
}%
\item The R-ELBO gradients with respect to $\phi$:
{\small
\begin{align*}
&\nabla_{\theta} \text{R-ELBO}(\theta, \phi) = \expect_R\left[ \nabla_{\theta} \log p_\theta(\mathbf{x}, \mathbf{z}) \right] \\
&- \mathtt{COV}_R( A_{\theta, \phi} (\mathbf{z}|\mathbf{x}, T),\sigma(l_{\theta, \phi}(\mathbf{z} \vert\mathbf{x}, T)) \nabla_{\theta} \log p_\theta(\mathbf{x}, \mathbf{z}))
\end{align*}
}%
\end{itemize}
where $\sigma(\ast)$ denotes the sigmoid function applied to $\ast$, \textit{i.e.}, $\sigma(\ast) = 1/(1+\exp(-\ast))$.
\end{theorem}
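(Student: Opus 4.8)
The plan is to differentiate the R-ELBO written in the form of Eq.~\eqref{eq:resampled_elbo} directly, rather than going through the decomposition of Eq.~\eqref{eq:relbo2}: the latter exposes $\nabla_\theta \log p_\theta(\mathbf{x})$, which is intractable, whereas Eq.~\eqref{eq:resampled_elbo} expresses everything as an expectation under $R_{\theta,\phi}$, from which Algorithm~\ref{alg:sample} can sample. Introduce the unnormalized resampled density $\tilde r_{\theta,\phi}(\mathbf{z}\vert\mathbf{x},T) \triangleq q_\phi(\mathbf{z}\vert\mathbf{x})\, a_{\theta,\phi}(\mathbf{z}\vert\mathbf{x},T)$, so that $r_{\theta,\phi} = \tilde r_{\theta,\phi}/Z_R$ and
\begin{align*}
\text{R-ELBO}(\theta,\phi) = \expect_R\!\left[ \log p_\theta(\mathbf{x},\mathbf{z}) + \log Z_R(\mathbf{x},T) - \log \tilde r_{\theta,\phi}(\mathbf{z}\vert\mathbf{x},T) \right].
\end{align*}
The crux is that the intractable $Z_R$ enters the objective in two ways --- explicitly through the additive $\log Z_R$ term, and implicitly through the expectation operator $\expect_R$ --- and one must show these two dependencies combine so that $Z_R$ disappears from the gradient.

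The first step I would prove is the general identity for gradients of expectations under a parameterized \emph{unnormalized} density: for $\xi\in\{\theta,\phi\}$ and any family $h_\xi(\mathbf{z})$,
\begin{align*}
\nabla_\xi \expect_R[h_\xi(\mathbf{z})] = \expect_R[\nabla_\xi h_\xi(\mathbf{z})] + \mathtt{COV}_R\!\left( h_\xi(\mathbf{z}),\, \nabla_\xi \log \tilde r_{\theta,\phi}(\mathbf{z}\vert\mathbf{x},T) \right).
\end{align*}
This follows by writing $\expect_R[h_\xi] = Z_R^{-1}\!\int \tilde r_\xi h_\xi\, d\mathbf{z}$, differentiating under the integral, and using $\nabla_\xi \log Z_R = \expect_R[\nabla_\xi \log \tilde r_{\theta,\phi}]$, which itself holds because $\nabla_\xi Z_R = \int \nabla_\xi \tilde r_\xi\, d\mathbf{z} = \int \tilde r_\xi \nabla_\xi \log \tilde r_\xi\, d\mathbf{z}$. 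This is exactly the content of Lemma~\ref{prop:kld}; the key point is that the resulting covariance is Monte Carlo estimable from samples of $R_{\theta,\phi}$ alone, with no need to evaluate $Z_R$.

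Next I would substitute $h_{\theta,\phi}(\mathbf{z}) = \log p_\theta(\mathbf{x},\mathbf{z}) + \log Z_R(\mathbf{x},T) - \log \tilde r_{\theta,\phi}(\mathbf{z}\vert\mathbf{x},T)$. Since $\log Z_R$ is constant in $\mathbf{z}$ it drops out of the covariance, and within $\expect_R[\nabla_\xi h_\xi]$ the term $\nabla_\xi \log Z_R$ cancels against $\expect_R[-\nabla_\xi \log \tilde r_{\theta,\phi}] = -\nabla_\xi \log Z_R$, leaving
\begin{align*}
\nabla_\xi \text{R-ELBO}(\theta,\phi) = \expect_R[\nabla_\xi \log p_\theta(\mathbf{x},\mathbf{z})] + \mathtt{COV}_R\!\left( A_{\theta,\phi}(\mathbf{z}\vert\mathbf{x},T),\, \nabla_\xi \log \tilde r_{\theta,\phi}(\mathbf{z}\vert\mathbf{x},T) \right),
\end{align*}
where $A_{\theta,\phi}$ is the $\mathbf{z}$-dependent part of the integrand, $\log p_\theta(\mathbf{x},\mathbf{z}) - \log q_\phi(\mathbf{z}\vert\mathbf{x}) - \log a_{\theta,\phi}(\mathbf{z}\vert\mathbf{x},T)$ (the form in the statement follows after substituting $\log a_{\theta,\phi} = -[l_{\theta,\phi}]^+$). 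It then remains to expand $\nabla_\xi \log \tilde r_{\theta,\phi} = \nabla_\xi \log q_\phi + \nabla_\xi \log a_{\theta,\phi}$. Using that the softplus has derivative $\sigma(\cdot)$ and that $l_{\theta,\phi} = -\log p_\theta(\mathbf{x},\mathbf{z}) + \log q_\phi(\mathbf{z}\vert\mathbf{x}) - T$, the chain rule gives $\nabla_\phi \log a_{\theta,\phi} = -\sigma(l_{\theta,\phi})\,\nabla_\phi \log q_\phi$ and $\nabla_\theta \log a_{\theta,\phi} = \sigma(l_{\theta,\phi})\,\nabla_\theta \log p_\theta$, hence $\nabla_\phi \log \tilde r_{\theta,\phi} = (1-\sigma(l_{\theta,\phi}))\,\nabla_\phi \log q_\phi = B_{\theta,\phi}$ and $\nabla_\theta \log \tilde r_{\theta,\phi} = \sigma(l_{\theta,\phi})\,\nabla_\theta \log p_\theta$, the latter because $q_\phi$ is independent of $\theta$. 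Plugging these in, and noting $\nabla_\phi \log p_\theta(\mathbf{x},\mathbf{z}) = 0$ for the $\phi$-gradient, gives the two claimed formulas.

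I expect the main obstacle to be the bookkeeping around $Z_R$: one has to keep track of its gradient as it enters through both the explicit $\log Z_R$ summand and the normalization of $r_{\theta,\phi}$, and verify the exact cancellation, so that the final estimator is a plain expectation plus a covariance depending only on samples from $R_{\theta,\phi}$ and on $\nabla \log p_\theta$, $\nabla \log q_\phi$, and $\sigma(l_{\theta,\phi})$. The rest --- the softplus chain rule, and using that $p_\theta$ and $q_\phi$ depend on disjoint parameter blocks so that several terms vanish --- is routine.
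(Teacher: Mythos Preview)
Your proposal is correct and follows essentially the same route as the paper: both apply the general identity of Lemma~\ref{prop:kld} (which you re-derive inline) with $\gamma_p=p_\theta(\mathbf{x},\mathbf{z})$ and $\gamma_r=\tilde r_{\theta,\phi}=q_\phi\cdot a_{\theta,\phi}$, then compute $\nabla_\xi\log\tilde r_{\theta,\phi}$ via the softplus/sigmoid chain rule. One small caution: your substitution $-\log a_{\theta,\phi}=[l_{\theta,\phi}]^+$ yields $A_{\theta,\phi}=\log p_\theta-\log q_\phi+[l_{\theta,\phi}]^+$ and $\nabla_\theta\log\tilde r_{\theta,\phi}=+\sigma(l_{\theta,\phi})\nabla_\theta\log p_\theta$, which is what the paper's own proof computes (cf.\ the expression for $A(\mathbf{z})$ in the proof of Theorem~\ref{prop:monotone}); the signs in the theorem \emph{statement} differ by typos, so don't be alarmed when your final formulas don't match it literally.
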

In the above expressions, the gradients are expressed as the covariance of two random variables that are a function of the latent variables sampled from the approximate posterior $R_{\theta, \phi}(\mathbf{z} \vert \mathbf{x}, T)$. Hence, we only need samples from $R_{\theta, \phi}$ for learning, which can be done using Algorithm~\ref{alg:sample} followed by Monte Carlo estimation analogous to estimation of the usual ELBO gradients.

\section{LEARNING ALGORITHM}

\begin{algorithm}[t]
\caption{Variational Rejection Sampling}\label{alg:overall}
\begin{algorithmic}[1]
\Statex \textbf{input}
Network architectures for
$p_\theta(\mathbf{x}, \mathbf{z}), q_\phi(\mathbf{z} \vert \mathbf{x})$; 
quantile hyperparameter $\gamma \in (0, 1)$; initial parameters, $\theta_0, \phi_0$; threshold update frequency,
$F$; quantile estimation sample count $N$; Covariance estimate sample count $S \ge 2$, SGD based optimizer, OPT; Dataset $\{\mathbf{x}_k\}_{k=1}^K$, Number of epochs: $N$.
\Statex \textbf{output} Final estimates, $\theta, \phi$.
\State Initialize $\theta \gets \theta_0$; $\phi \gets \phi_0$; $T(\mathbf{x}) = +\infty ~ \forall ~\mathbf{x}$.
\For{$e \in \{1, \ldots, N\}$}
	\If {$e \;\mathrm{ mod }\; F =0$ } \label{line:threshold_start} 
		\For{ each $\mathbf{x}$ in dataset}
			\State Sample $Z^N \gets \{\mathbf{z}_1,\ldots, \mathbf{z}_N\} \sim q_\phi(\mathbf{z} \vert \mathbf{x})$.
			\State $T(\mathbf{x}) \gets \hat{T}^N_\gamma(\mathbf{x}, \theta, \phi)$, the Monte Carlo estimate of Eq.~\eqref{eqn:tgamma} based on samples $Z^N$.
		\EndFor
	\EndIf\label{line:threshold_end} 
	\For{ each $\mathbf{x}$ in dataset}
    \State\label{line:grad_start} Draw $S$ independent samples $\{\mathbf{z}_1, \ldots, \mathbf{z}_S\} \sim R_{\theta, \phi}(\mathbf{z}\vert \mathbf{x}, T(\mathbf{x}))$ using Algorithm~\ref{alg:sample}.
		\State\label{line:grad_end} Use Theorem \ref{prop:gradients} and  Eq.~\eqref{eq:covest} with $\{\mathbf{z}_1, \ldots, \mathbf{z}_S\}$ to estimate gradients, $\hat{g}_\theta, \hat{g}_\phi$. 
		\State Update $\theta, \phi \gets OPT(\theta, \phi, \hat{g}_\theta, \hat{g}_\phi)$.
	\EndFor
\EndFor
\end{algorithmic}
\end{algorithm}

A practical implementation of variational rejection sampling as shown in Algorithm~\ref{alg:overall} requires several algorithmic design choices that we discuss in this section.
\subsection{Threshold selection heuristic}
The monotonicity of $\mathtt{KL}$ divergence in Theorem~\ref{prop:monotone} suggests that it is desirable to choose a value of $T$ as low as computationally feasible for obtaining the highest accuracy. 
However, the quality of the approximate posterior, $Q_\phi(\mathbf{z} \vert \mathbf{x})$ for a fixed parameter $\phi$ could vary significantly across different examples $\mathbf{x}$. This would require making $T$ dependent on $\mathbf{x}$. Although learning $T$ in a parametric way is one possibility, in this work, we restrict attention to a simple estimation based approach that reduces the design choice to a single hyperparameter that can be adjusted to trade extra computation for accuracy.
For each fixed $\mathbf{x}$, let $\mathcal{L}_{\theta, \phi}(\mathbf{x})$ denote the probability distribution of the scalar random variable, $ -\log p_\theta(\mathbf{x}, \mathbf{z}) + \log q_\phi(\mathbf{z} \vert \mathbf{x})$, where $\mathbf{z} \sim Q_\phi(\mathbf{z} \vert \mathbf{x})$. 
Let $\mathcal{Q}_\mathcal{L}$ denote the quantile function\footnote{Recall that for a given CDF $\mathcal{F}(x)$, the quantile function is its `inverse', namely $\mathcal{Q}(p) = \inf \{ x \in \mathbb{R}: p \le \mathcal{F}(x)\}$.} for any given 1-D distribution $\mathcal{L}$.
For each quantile parameter $\gamma \in (0, 1]$, we consider a heuristic family of threshold parameters given $\mathbf{x}, \phi, \theta$, defined as:
\begin{equation}\label{eqn:tgamma}
T_\gamma(\mathbf{x}, \theta, \phi) \triangleq \mathcal{Q}_{\mathcal{L}_{\theta, \phi}(\mathbf{x})}(\gamma).
\end{equation}

For example, for $\gamma=0.5$, this is the median of $\mathcal{L}_{\theta, \phi}(\mathbf{x})$. Eq.~\eqref{eqn:tgamma} implies that the acceptance probability stays roughly in the range of $\gamma$ for most samples. This is due to the fact that the negative log of the acceptance probability, defined in Eq.~\eqref{eq:softplus_accept_prob} as $\left[l_{\theta, \phi}(\mathbf{z} \vert \mathbf{x}, T) \right]^+$ is positive approximately with probability $1-\gamma$, an event which is likely to result in a rejection. 
In Algorithm~\ref{alg:overall}, we compute a Monte Carlo estimate for the threshold and denote the resulting value using $N$ samples as $\hat{T}^N_\gamma(\mathbf{x}, \theta, \phi)$ (Lines~\ref{line:threshold_start}-\ref{line:threshold_end}). This estimation is done independently from the SGD updates, once every $F$ epochs, to save computational cost, and also implies that $T$ is not continuously updated as a function of $\theta, \phi$. 
Technically speaking, this introduces a slight bias in the gradients through their dependence on $T$, but we ignore this correction since it only happens once every few epochs.

\subsection{Computing covariance estimates}
To compute an unbiased Monte Carlo estimate of the covariance terms in the gradients, we need to subtract the mean of at least one random variable while forming the product term. In order to do this in Algorithm~\ref{alg:overall} (Lines~\ref{line:grad_start}-\ref{line:grad_end}), we process a fixed batch of (accepted) samples per gradient update, and for each sample, use all-but-one to compute the mean estimate to be subtracted, similar to the local learning signals proposed in \citet{mnih-icml2016}. This requires generating $S \ge 2$ samples from $R_{\theta, \phi}$ simultaneously at each step to be able to compute each gradient. More precisely, the leave-one-out unbiased Monte Carlo estimator for the covariance of two random variables $A, B$ is defined as follows. Let $(a_1, b_1), \ldots, (a_S, b_S) \sim (A, B)$ be $S$ independent samples from the joint pair $(A, B)$, and let $\hat{m}_A$ denote the sample mean for $A$: $\hat{m}_A \triangleq \frac{1}{S} \sum_{i=1}^S a_i$. Then the covariance estimate is given by:
\begin{equation}\label{eq:covest}
\mathtt{\widehat{COV}}_R(A, B) \triangleq \frac{1}{S-1} \sum_{i=1}^S(a_i - \hat{m}_A) b_i.
\end{equation}

\subsection{Hyperparameters and overall algorithm}
In summary, Algorithm~\ref{alg:overall} involves the following hyperparameters: $S$, the number of samples for estimating covariance; $\gamma$, the quantile used for setting thresholds;  $F$, the number of epochs between updating $T(\mathbf{x})$. 

\section{EXPERIMENTAL EVALUATION}
\label{sec:expt}
 
We evaluated variational rejection sampling (VRS) against competing methods on a diagnostic synthetic experiment and a benchmark density estimation task.

\subsection{Diagnostic experiments}
In this experiment, we consider a synthetic setup that involves fitting an approximate posterior candidate from a constrained family to a fixed target distribution that clearly falls outside the approximating family. We restrict attention to training a 1-D parameter, $\phi$ exclusively (\textit{i.e.}, we do not consider optimization over $\theta$
), and for the non-amortized case (\textit{i.e.}, conditioning on $\mathbf{x}$ is not applicable). The target distribution is 1-D, with support on non-negative integers, $z \in \{0, 1, \ldots\}$ and denoted as $P(z)$. This distribution, visualized in Figure \ref{fig:target_poisson}, is obtained by removing the mass on the first $c$ integers of a Poisson distribution with rate $\lambda^* > 0$. More details are given in the Appendix. The approximate proposal is parameterized as $Q_\phi \triangleq \mathtt{Poi}(e^{\phi})$, where $\phi$ is an unconstrained scalar, and denotes a (unmodified) Poisson distribution with the (non-negative) rate parameter, $e^{\phi}$. Note that for $\mathtt{Poi}(e^\phi)$ to explicitly represent a small mass on $z < c$ would require $\phi \rightarrow \infty$, but this would be a bad fit for points just above $c$. As a result, $\{Q_\phi\}$ does not contain candidates close to the target distribution in the sense of $\mathtt{KL}$ divergence, even a simple resampling modification could transform the raw proposal $Q_\phi$ into a better candidate approximation, $R$.

\begin{figure}[t]
\centering
\begin{minipage}{.48\columnwidth}
\centering
\includegraphics[width=\columnwidth]{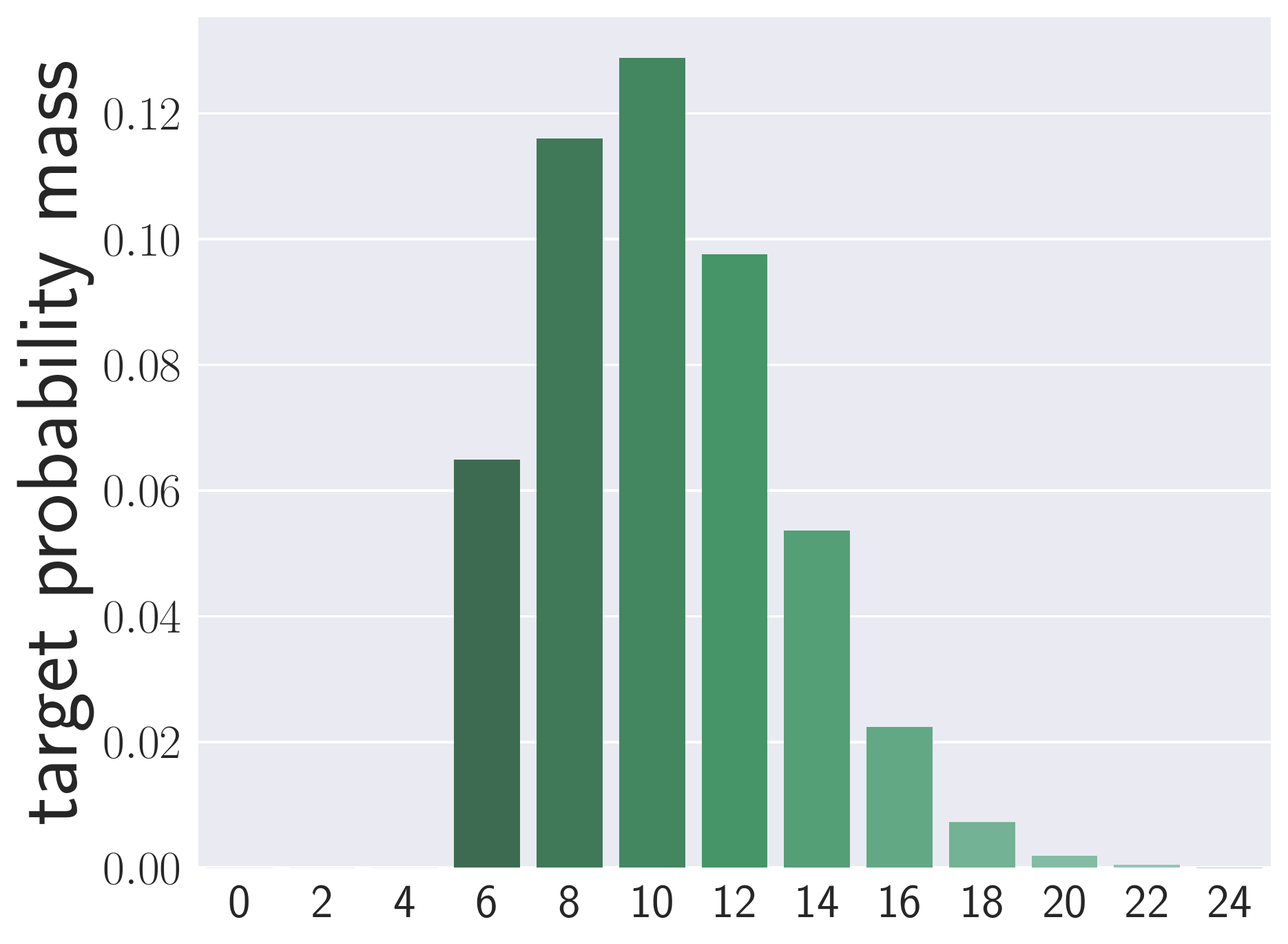}
\caption{Target distribution, $P$.
}
\label{fig:target_poisson}
\end{minipage}
\begin{minipage}{.50\columnwidth}
\centering
\includegraphics[width=\columnwidth]{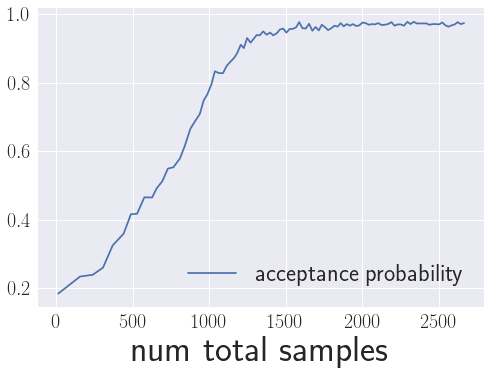} 
 \caption{Acceptance probability vs. SGD
iteration
 }
 \label{fig:accept_rbn}
\end{minipage}

\begin{subfigure}{0.48\columnwidth}
        \centering
        \includegraphics[width=\columnwidth]{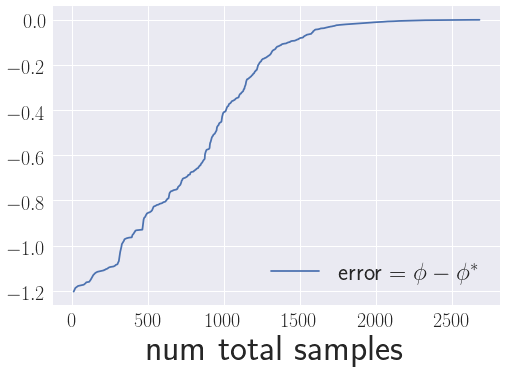} 
        \caption{Error: $\phi - \phi^*$} \label{fig:error_rbn}
    \end{subfigure}
~
\begin{subfigure}{0.48\columnwidth}
        \centering
        \includegraphics[width=\columnwidth]{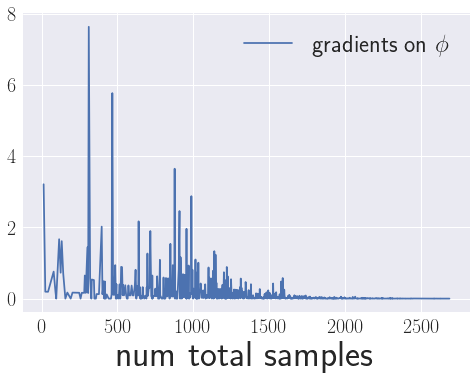} 
        \caption{Gradients for $\phi$.} \label{fig:grads_rbn}
    \end{subfigure}
\caption{VRS learning dynamics. The x-axis shows the number of total samples (both accepted and rejected) at each SGD iteration.}\label{fig:toy_rbn}

\begin{subfigure}{0.48\columnwidth}
        \centering
        \includegraphics[width=\columnwidth]{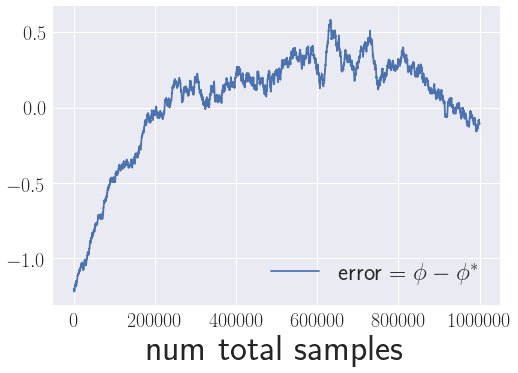} 
        \caption{Error: $\phi - \phi^*$.} \label{fig:error_vimco}
    \end{subfigure}
    ~
\begin{subfigure}{0.48\columnwidth}
        \centering
        \includegraphics[width=\columnwidth]{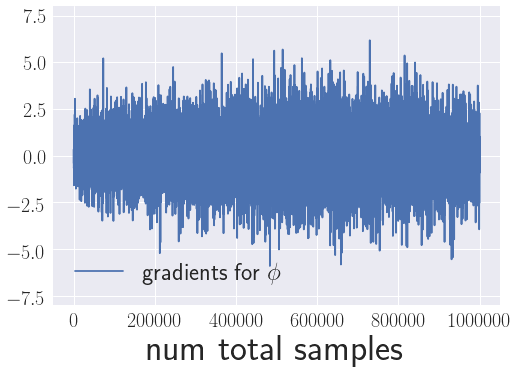} 
        \caption{Gradients for $\phi$.} \label{fig:grads_vimco}
    \end{subfigure}
    \caption{VIMCO learning dynamics. The x-axis shows the number of total samples, which is equal to $k$ times the number of iterations at each SGD iteration.}\label{fig:toy_vimco}
\vskip -0.2in
\end{figure}

In Figures~\ref{fig:accept_rbn} and \ref{fig:toy_rbn} we illustrate the dynamics of SGD using VRS gradients for approximating $P$.
To keep the analysis simple, the threshold $T$ was kept fixed at a constant value during learning. Figure~\ref{fig:accept_rbn} shows the efficiency of the sampler improving as the learning progresses due to a better fit to the target distribution. Figure~\ref{fig:error_rbn} shows the difference between the current parameter $\phi$ and $\phi^* = \log \lambda^*$ from the target distribution, quickly converging to $0$ as learning proceeds. 

As a benchmark, we evaluated the dynamics based on VIMCO gradients~\citep{mnih-icml2016}. Figure~\ref{fig:toy_vimco} suggests that the signal in gradients is too low (\textit{i.e.}, high variance in gradient estimates).
This behavior was persistent even with much smaller learning rates and large sample sizes compared to VRS gradients. One explanation is that the VIMCO gradient update for $\phi$ has a term that assigns the same average weight to the entire batch of samples, both good and bad ones (see Eq.~(8) in \citet{mnih-icml2016}). In contrast, Algorithm~\ref{alg:sample} discards rejected samples from contributing to the gradients explicitly. Yet another qualitative aspect that distinguishes VRS gradients from importance weighted multi-sample objective gradients is that Algorithm \ref{alg:sample} can dynamically adapt the amount of additional computation spent in resampling based on sample quality, as opposed to being fixed in advance.

\begin{table}[t]
\centering
\caption{Test NLL (in nats) for MNIST comparing VRS with published results. Lower is better.
}
\label{tab:mnist}
\begin{subtable}[t]{\columnwidth}
\centering
\caption{Baseline results from \citet{tucker2017rebar}}
\begin{tabular}{l|cc}
Model / Architecture    & 200 & 200-200  \\ \hline
NVIL ($k=1$)   & 112.5 &  99.6              \\
MuProp & 111.7 & 99.07  \\
REBAR ($\lambda = 0.1$) & 111.7 & 99    \\
REBAR & 111.6 & 99.8    \\
Concrete ($\lambda = 0.1$) & 111.3 & 102.8  \\ \hline
VRS (IS, $\gamma = 0.95$) & 106.97 & 96.38       \\
VRS (RS, $\gamma = 0.95$) & 106.89 &  96.30   \\
VRS (IS, $\gamma = 0.9$) & 106.71 & \textbf{96.26}   \\
VRS (RS, $\gamma = 0.9$) & \textbf{106.63}&  96.36
\end{tabular}
~
\vspace{0.1in}
\end{subtable}
\begin{subtable}[t]{\columnwidth}
\centering
\caption{Baseline results from \citet{mnih-icml2016}}
\begin{tabular}{l|c}
Model / Architecture    & 200-200-200 \\ \hline
NVIL ($k=1$)  & 95.2               \\
NVIL ($k=2$)  & 93.6              \\
NVIL ($k=5$)  & 93.7              \\
NVIL ($k=10$) & 93.4              \\
NVIL ($k=50$)  & 96.2\\\hline
RWS ($k=2$)  & 94.6              \\
RWS ($k=5$)  & 93.4              \\
RWS ($k=10$) & 93.0              \\
RWS ($k=50$)  & 92.5\\\hline
VIMCO ($k=2$)  & 93.5              \\
VIMCO ($k=5$)  & 92.8              \\
VIMCO ($k=10$) & 92.6              \\
VIMCO ($k=50$)  & 91.9\\ \hline
VRS (IS, $\gamma = 0.95$)  & 92.01 \\
VRS (RS, $\gamma = 0.95$) &  91.93 \\
VRS (IS, $\gamma = 0.9$) &  92.09 \\
VRS (RS, $\gamma = 0.9$) &  \textbf{91.69} \\
\end{tabular}
\end{subtable}
\vspace{-0.1in}
\end{table}
\subsection{Generative modeling}
We trained sigmoid belief networks (SBN) on the binarized MNIST dataset~\citep{lecun2010mnist}.
Following prior work, the SBN benchmark architectures for this task consist of several linear layers of 200 hidden units, and the recognition model has the same architecture in the reverse direction. Training such models is sensitive to choices of hyperparamters, and hence we directly compare VRS with published baselines in Table~\ref{tab:mnist}. The hyperparameter details for SBNs trained with VRS are given in the Appendix.

With regards to key baseline hyperparameters in Table~\ref{tab:mnist}, Concrete and REBAR specify a temperature controlling the degree of relaxation (denoted by $\lambda$), whereas multi-sample estimators based on importance weighting specify the number of samples, $k$ to trade-off computation for statistical accuracy. The relevant parameter, $\gamma$ in our case is not directly comparable but we report results for values of $\gamma$ where empirically, the average number of rejections per training example was somewhere between drawing $k=5$ and $k=20$ samples for an equivalent importance weighted objective for both $\gamma=0.95$ and $\gamma=0.9$ (with the latter requiring more computation). Additionally, we provide two estimators for evaluating the test lower bound for VRS. For the importance sampled (IS) version, we simply evaluate the ELBO using importance sampling with the original posterior, $Q_\phi$. The resampled (RS) version, on the other hand uses the resampled proposal, $R_{\theta, \phi}$ with the partition function, $Z_R$ estimated as a Monte Carlo expectation.

From the results, we observe that VRS outperforms other methods, including multi-sample estimators with $k$ as high as $50$ that require much greater computation than the VRS models considered. Generally speaking, the RS estimates are better than the corresponding IS estimates, and decreasing $\gamma$ improves performance (at the cost of increased computation). 

\section{CONCLUSION}
We presented a rejection sampling framework for variational learning in generative models that is theoretically principled and allows for a flexible trade-off between computation and statistical accuracy by improving the quality of the variational approximation made by any parameterized model. We demonstrated the practical benefits of our framework over competing alternatives based on multi-sample objectives for variational learning of discrete distributions. 

In the future, we plan to generalize VRS while exploiting factorization structure in the generative model based on intermediate resampling checkpoints \citep{gummadi-nipsaabi2014}. Many baseline methods in our experiments are complementary and could benefit from VRS.
Yet another direction involves the applications to stochastic optimization problems arising in reinforcement learning and structured prediction. 

\section*{ACKNOWLEDGEMENTS}
We are thankful to Zhengyuan Zhou and Daniel Levy for helpful comments on early drafts. This research has been supported by a Microsoft Research PhD fellowship in machine learning for the first author, NSF grants \#1651565, \#1522054, \#1733686, Toyota Research Institute, Future of Life Institute, and Intel.
\bibliography{refs}

\begin{thebibliography}{45}
\providecommand{\natexlab}[1]{#1}
\providecommand{\url}[1]{\texttt{#1}}
\expandafter\ifx\csname urlstyle\endcsname\relax
  \providecommand{\doi}[1]{doi: #1}\else
  \providecommand{\doi}{doi: \begingroup \urlstyle{rm}\Url}\fi

\bibitem[Burda et~al.(2016)Burda, Grosse, and Salakhutdinov]{burda-iclr2016}
Yuri Burda, Roger Grosse, and Ruslan Salakhutdinov.
\newblock Importance weighted autoencoders.
\newblock In \emph{International Conference on Learning Representations}, 2016.

\bibitem[Chen et~al.(2017)Chen, Kingma, Salimans, Duan, Dhariwal, Schulman,
  Sutskever, and Abbeel]{chen2016variational}
Xi~Chen, Diederik~P Kingma, Tim Salimans, Yan Duan, Prafulla Dhariwal, John
  Schulman, Ilya Sutskever, and Pieter Abbeel.
\newblock Variational lossy autoencoder.
\newblock In \emph{International Conference on Learning Representations}, 2017.

\bibitem[Dayan et~al.(1995)Dayan, Hinton, Neal, and Zemel]{Dayan95}
Peter Dayan, Geoffrey~E. Hinton, Radford~M. Neal, and Richard~S. Zemel.
\newblock The {H}elmholtz machine.
\newblock \emph{Neural Computation}, 7\penalty0 (5):\penalty0 889--904,
  September 1995.

\bibitem[Fu(2006)]{fu2006gradient}
Michael~C Fu.
\newblock Gradient estimation.
\newblock \emph{Handbooks in operations research and management science},
  13:\penalty0 575--616, 2006.

\bibitem[Gershman and Goodman(2014)]{gershman-css2014}
Sam Gershman and Noah~D. Goodman.
\newblock Amortized inference in probabilistic reasoning.
\newblock In \emph{Annual Conference of the {C}ognitive {S}cience {S}ociety},
  2014.

\bibitem[Glasserman(2013)]{glasserman2013monte}
Paul Glasserman.
\newblock \emph{Monte Carlo methods in financial engineering}, volume~53.
\newblock Springer Science \& Business Media, 2013.

\bibitem[Glynn(1990)]{glynn1990likelihood}
Peter~W Glynn.
\newblock Likelihood ratio gradient estimation for stochastic systems.
\newblock \emph{Communications of the ACM}, 33\penalty0 (10):\penalty0 75--84,
  1990.

\bibitem[Grathwohl et~al.(2018)Grathwohl, Choi, Wu, Roeder, and
  Duvenaud]{Grathwohletal2018relax}
Will Grathwohl, Dami Choi, Yuhuai Wu, Geoff Roeder, and David Duvenaud.
\newblock Backpropagation through the void: Optimizing control variates for
  black-box gradient estimation.
\newblock In \emph{International Conference on Learning Representations}, 2018.

\bibitem[Gregor et~al.(2014)Gregor, Danihelka, Mnih, Blundell, and
  Wierstra]{gregor-icml2014}
Karol Gregor, Ivo Danihelka, Andriy Mnih, Charles Blundell, and Daan Wierstra.
\newblock Deep autoregressive networks.
\newblock In \emph{International Conference on Machine Learning}, 2014.

\bibitem[Gregor et~al.(2015)Gregor, Danihelka, Graves, Rezende, and
  Wierstra]{gregor2015draw}
Karol Gregor, Ivo Danihelka, Alex Graves, Danilo~Jimenez Rezende, and Daan
  Wierstra.
\newblock {DRAW}: A recurrent neural network for image generation.
\newblock In \emph{International Conference on Machine Learning}, 2015.

\bibitem[Grover and Ermon(2016)]{grover2016variational}
Aditya Grover and Stefano Ermon.
\newblock Variational {B}ayes on {M}onte {C}arlo steroids.
\newblock In \emph{Advances in Neural Information Processing Systems}, 2016.

\bibitem[Grover and Ermon(2018)]{grover2018boosted}
Aditya Grover and Stefano Ermon.
\newblock Boosted generative models.
\newblock In \emph{AAAI Conference on Artificial Intelligence}, 2018.

\bibitem[Gu et~al.(2016)Gu, Levine, Sutskever, and Mnih]{gu-iclr1016}
Shixiang Gu, Sergey Levine, Ilya Sutskever, and Andriy Mnih.
\newblock Mu{P}rop: Unbiased backpropagation for stochastic neural networks.
\newblock In \emph{International Conference on Learning Representations}, 2016.

\bibitem[Gummadi(2014)]{gummadi-nipsaabi2014}
R.~Gummadi.
\newblock Resampled belief networks for variational inference.
\newblock In \emph{NIPS Workshop on Advances in Variational Inference}, 2014.

\bibitem[Halton(1970)]{halton1970retrospective}
John~H Halton.
\newblock A retrospective and prospective survey of the {M}onte {C}arlo method.
\newblock \emph{SIAM review}, 12\penalty0 (1):\penalty0 1--63, 1970.

\bibitem[Hoffman et~al.(2013)Hoffman, Blei, Wang, and
  Paisley]{hoffman-jmlr2013}
Matthew~D Hoffman, David~M Blei, Chong Wang, and John Paisley.
\newblock Stochastic variational inference.
\newblock \emph{Journal of Machine Learning Research}, 14\penalty0
  (1):\penalty0 1303--1347, 2013.

\bibitem[Hsu et~al.(2016)Hsu, Achim, and Ermon]{hsu2016tight}
Lun-Kai Hsu, Tudor Achim, and Stefano Ermon.
\newblock Tight variational bounds via random projections and {I}-projections.
\newblock In \emph{Artificial Intelligence and Statistics}, 2016.

\bibitem[Jang et~al.(2017)Jang, Gu, and Poole]{jang2016categorical}
Eric Jang, Shixiang Gu, and Ben Poole.
\newblock Categorical reparameterization with {G}umbel-softmax.
\newblock In \emph{International Conference on Learning Representations}, 2017.

\bibitem[Kingma and Welling(2014)]{kingma-iclr2014}
Diederik~P Kingma and Max Welling.
\newblock Auto-encoding variational {B}ayes.
\newblock In \emph{International Conference on Learning Representations}, 2014.

\bibitem[Kuleshov and Ermon(2017)]{kuleshov2017neural}
Volodymyr Kuleshov and Stefano Ermon.
\newblock Neural variational inference and learning in undirected graphical
  models.
\newblock In \emph{Advances in Neural Information Processing Systems}, 2017.

\bibitem[LeCun et~al.(2010)LeCun, Cortes, and Burges]{lecun2010mnist}
Yann LeCun, Corinna Cortes, and Christopher~JC Burges.
\newblock {MNIST} handwritten digit database.
\newblock \emph{http://yann. lecun. com/exdb/mnist}, 2010.

\bibitem[Levy and Ermon(2018)]{levy2017deterministic}
Daniel Levy and Stefano Ermon.
\newblock Deterministic policy optimization by combining pathwise and score
  function estimators for discrete action spaces.
\newblock In \emph{AAAI Conference on Artificial Intelligence}, 2018.

\bibitem[Maddison et~al.(2017)Maddison, Mnih, and Teh]{maddison2016concrete}
Chris~J Maddison, Andriy Mnih, and Yee~Whye Teh.
\newblock The {C}oncrete distribution: A continuous relaxation of discrete
  random variables.
\newblock In \emph{International Conference on Learning Representations}, 2017.

\bibitem[Miller et~al.(2017)Miller, Foti, D'Amour, and
  Adams]{miller2017reducing}
Andrew~C Miller, Nicholas~J Foti, Alexander D'Amour, and Ryan~P Adams.
\newblock Reducing reparameterization gradient variance.
\newblock In \emph{Advances in Neural Information Processing Systems}, 2017.

\bibitem[Mnih and Gregor(2014)]{mnih-icml2014}
Andriy Mnih and Karol Gregor.
\newblock Neural variational inference and learning in belief networks.
\newblock In \emph{International Conference on Machine Learning}, 2014.

\bibitem[Mnih and Rezende(2016)]{mnih-icml2016}
Andriy Mnih and Danilo~J Rezende.
\newblock Variational inference for {M}onte {C}arlo objectives.
\newblock In \emph{International Conference on Machine Learning}, 2016.

\bibitem[Mohamed and Lakshminarayanan(2016)]{mohamed2016learning}
Shakir Mohamed and Balaji Lakshminarayanan.
\newblock Learning in implicit generative models.
\newblock \emph{arXiv preprint arXiv:1610.03483}, 2016.

\bibitem[Naesseth et~al.(2017)Naesseth, Ruiz, Linderman, and
  Blei]{naesseth2017reparameterization}
Christian Naesseth, Francisco Ruiz, Scott Linderman, and David Blei.
\newblock Reparameterization gradients through acceptance-rejection sampling
  algorithms.
\newblock In \emph{International Conference on Artificial Intelligence and
  Statistics}, 2017.

\bibitem[Naesseth et~al.(2018)Naesseth, Linderman, Ranganath, and
  Blei]{naesseth2017variational}
Christian~A Naesseth, Scott~W Linderman, Rajesh Ranganath, and David~M Blei.
\newblock Variational sequential {M}onte {C}arlo.
\newblock In \emph{International Conference on Artificial Intelligence and
  Statistics}, 2018.

\bibitem[Paisley et~al.(2012)Paisley, Blei, and Jordan]{paisley2012variational}
John Paisley, David Blei, and Michael Jordan.
\newblock Variational {B}ayesian inference with stochastic search.
\newblock In \emph{International Conference on Machine Learning}, 2012.

\bibitem[Raiko et~al.(2015)Raiko, Berglund, Alain, and
  Dinh]{raiko2014techniques}
Tapani Raiko, Mathias Berglund, Guillaume Alain, and Laurent Dinh.
\newblock Techniques for learning binary stochastic feedforward neural
  networks.
\newblock In \emph{International Conference on Learning Representations}, 2015.

\bibitem[Ranganath et~al.(2014)Ranganath, Gerrish, and
  Blei]{ranganath-aistats2013}
Rajesh Ranganath, Sean Gerrish, and David~M Blei.
\newblock Black box variational inference.
\newblock In \emph{International Conference on Artificial Intelligence and
  Statistics}, 2014.

\bibitem[Ranganath et~al.(2016)Ranganath, Tran, and
  Blei]{ranganath2016hierarchical}
Rajesh Ranganath, Dustin Tran, and David Blei.
\newblock Hierarchical variational models.
\newblock In \emph{International Conference on Machine Learning}, 2016.

\bibitem[Rezende and Mohamed(2015)]{rezende2015variational}
Danilo~Jimenez Rezende and Shakir Mohamed.
\newblock Variational inference with normalizing flows.
\newblock In \emph{International Conference on Machine Learning}, 2015.

\bibitem[Rezende et~al.(2014)Rezende, Mohamed, and Wierstra]{rezende-icml2014}
Danilo~Jimenez Rezende, Shakir Mohamed, and Daan Wierstra.
\newblock Stochastic backpropagation and approximate inference in deep
  generative models.
\newblock In \emph{International Conference on Machine Learning}, 2014.

\bibitem[Roeder et~al.(2017)Roeder, Wu, and Duvenaud]{roeder2017sticking}
Geoffrey Roeder, Yuhuai Wu, and David Duvenaud.
\newblock Sticking the landing: An asymptotically zero-variance gradient
  estimator for variational inference.
\newblock In \emph{Advances in Neural Information Processing Systems}, 2017.

\bibitem[Ruiz et~al.(2016)Ruiz, Titsias, and Blei]{ruiz2016generalized}
Francisco Ruiz, Michalis Titsias, and David Blei.
\newblock The generalized reparameterization gradient.
\newblock In \emph{Advances in Neural Information Processing Systems}, 2016.

\bibitem[Salimans et~al.(2015)Salimans, Kingma, and
  Welling]{salimans2015markov}
Tim Salimans, Diederik Kingma, and Max Welling.
\newblock Markov chain {M}onte {C}arlo and variational inference: Bridging the
  gap.
\newblock In \emph{International Conference on Machine Learning}, 2015.

\bibitem[Schulman et~al.(2015)Schulman, Heess, Weber, and
  Abbeel]{schulman2015gradient}
John Schulman, Nicolas Heess, Theophane Weber, and Pieter Abbeel.
\newblock Gradient estimation using stochastic computation graphs.
\newblock In \emph{Advances in Neural Information Processing Systems}, 2015.

\bibitem[Song et~al.(2017)Song, Zhao, and Ermon]{song2017nice}
Jiaming Song, Shengjia Zhao, and Stefano Ermon.
\newblock A-nice-mc: Adversarial training for {MCMC}.
\newblock In \emph{Advances in Neural Information Processing Systems}, 2017.

\bibitem[Titsias and L{\'a}zaro-Gredilla(2014)]{titsias2014doubly}
Michalis Titsias and Miguel L{\'a}zaro-Gredilla.
\newblock Doubly stochastic variational {B}ayes for non-conjugate inference.
\newblock In \emph{International Conference on Machine Learning}, 2014.

\bibitem[Tucker et~al.(2017)Tucker, Mnih, Maddison, and
  Sohl-Dickstein]{tucker2017rebar}
George Tucker, Andriy Mnih, Chris~J Maddison, and Jascha Sohl-Dickstein.
\newblock {REBAR}: Low-variance, unbiased gradient estimates for discrete
  latent variable models.
\newblock In \emph{Advances in Neural Information Processing Systems}, 2017.

\bibitem[Williams(1992)]{williams1992simple}
Ronald~J Williams.
\newblock Simple statistical gradient-following algorithms for connectionist
  reinforcement learning.
\newblock \emph{Machine learning}, 8\penalty0 (3-4):\penalty0 229--256, 1992.

\bibitem[Wingate and Weber(2013)]{wingate2013automated}
David Wingate and Theophane Weber.
\newblock Automated variational inference in probabilistic programming.
\newblock \emph{arXiv preprint arXiv:1301.1299}, 2013.

\bibitem[Zhu and Ermon(2015)]{zhu2015hybrid}
Michael Zhu and Stefano Ermon.
\newblock A hybrid approach for probabilistic inference using random
  projections.
\newblock In \emph{International Conference on Machine Learning}, 2015.

\end{thebibliography}

\newpage
\newpage
\onecolumn
\section*{Appendix}
\begin{appendices}
\section{Proofs of theoretical results}

\subsection{Theorem~\ref{prop:monotone}}
\begin{proof}
We can explicitly write down the acceptance probability function as:
\begin{align*}
a_{\theta, \phi}(\mathbf{z} \vert \mathbf{x}, T) &= e^{-[l_{\theta, \phi}(\mathbf{z} \vert \mathbf{x}, T)]^+}\\
&= 
\frac{e^T p_\theta(\mathbf{x}, \mathbf{z})}{ e^T p_\theta(\mathbf{x}, \mathbf{z}) + q_\phi(\mathbf{z} \vert \mathbf{x})}
\end{align*}

From the above equation, it is easy to see that as $T \rightarrow \infty$, we get an acceptance probability close to 1, resulting in an approximate posterior close to the original proposal, $q_\phi(\mathbf{z} \vert \mathbf{x})$, whereas with $T \rightarrow -\infty$, the acceptance probability degenerates to a standard rejection sampler with acceptance probability close to $e^T \frac{p_\theta(\mathbf{x}, \mathbf{z})}{q_\phi(\mathbf{z} \vert \mathbf{x})}$, but with potentially untenable efficiency. Intermediate values of $T$ can interpolate between these two extremes. 

To prove monotonicity, we first derive the partial derivative of the $\mathtt{KL}$ divergence with respect to $T$ as a covariance of two random variables that are monotone transformations of each other. To get the derivative, we use the fact that the gradient of the $\mathtt{KL}$ divergence is the negative of the ELBO gradient derived in Theorem \ref{prop:kld}. Recall that the ELBO and the KL divergence add up to a constant independent of $T$, and that the expressions for the gradients with respect to $T$ and $\phi$ are functionally the same. We have: 
$$ \nabla_T \mathtt{KL} (R_{\theta, \phi}(\mathbf{z} \vert \mathbf{x}, T) \Vert P_\theta(\mathbf{z} \vert \mathbf{x})) =  - \mathtt{COV}_R \left(A(\mathbf{z}), \nabla_{T}  \log \gamma_r(\mathbf{z})\right),$$ where: 
\begin{align*}
A(\mathbf{z}) &= \log p_\theta(\mathbf{x}, \mathbf{z}) -  \log \gamma_r(\mathbf{z}) \\
&= \log p_\theta(\mathbf{x}, \mathbf{z}) - \log q_\phi(\mathbf{z} \vert \mathbf{x}) \\
&+ \left[ \log q_\phi(\mathbf{z}\vert\mathbf{x}) - \log p_\theta(\mathbf{x}, \mathbf{z}) - T \right]^{+} \\
&= [l_{\theta, \phi}(\mathbf{z}\vert \mathbf{x}, T)]^+ - l_{\theta, \phi}(\mathbf{z}\vert\mathbf{x}, T) - T.
\end{align*} 
For the second term in the covariance, we can use the expressions from Eq.~\eqref{eq:resampled_proposal} and Eq.~\eqref{eq:softplus_accept_prob} to write: 
\begin{align*}
\nabla_{T}  \log \gamma_r(\mathbf{z}) &=  - \nabla_{T} [l_{\theta, \phi}(\mathbf{z}\vert\mathbf{x}, T)]^+ \\
&= - \sigma(l_{\theta, \phi}(\mathbf{z}\vert\mathbf{x}, T)) \nabla_{T} l_{\theta, \phi}(\mathbf{z}\vert\mathbf{x}, T) \\
&= \sigma(l_{\theta, \phi}(\mathbf{z}\vert\mathbf{x}, T)),
\end{align*}
where $\sigma(x) \triangleq 1 / (1 + e^{-x})$ is the sigmoid function. Putting the two terms together, we have: 
\begin{align*}
&\nabla_T \mathtt{KL} (R_{\theta, \phi}(\mathbf{z} \vert \mathbf{x}, T) \Vert P_\theta(\mathbf{z} \vert \mathbf{x})) = \\& - \mathtt{COV}_R ( [l_{\theta, \phi}(\mathbf{z}\vert\mathbf{x}, T)]^+ - l_{\theta, \phi}(\mathbf{z}\vert\mathbf{x}, T) - T, ~ \sigma(l_{\theta, \phi}(\mathbf{z}\vert\mathbf{x}, T))).
\end{align*} 
To prove that the two random variables, $[l_{\theta, \phi}(\mathbf{z}\vert\mathbf{x}, T)]^+ - l_{\theta, \phi}(\mathbf{z}\vert\mathbf{x}, T) - T$ and $\sigma(l_{\theta, \phi}(\mathbf{z}\vert\mathbf{x}, T))$ are a monotone transformation of each other, we can use the identity $[x]^+ - x = \log(1 + e^x) - x = - \log \sigma(x)$ to rewrite the final expression for the gradient of the $\mathtt{KL}$ divergence as:
\begin{align*}
\nabla_T \mathtt{KL} (R_{\theta, \phi}(\mathbf{z} \vert \mathbf{x}, T) \Vert P_\theta(\mathbf{z} \vert \mathbf{x})) &= \mathtt{COV}_R \left( \log \sigma(l_{\theta, \phi}(\mathbf{z}\vert\mathbf{x}, T)) + T, ~ \sigma(l_{\theta, \phi}(\mathbf{z}\vert\mathbf{x}, T))\right)
\end{align*} 
The inequality follows from the fact that the covariance of a random variable and a monotone transformation (the logarithm in this case) is non-negative. 
\end{proof}

\subsection{Theorem~\ref{prop:gradients}}
Before proving Theorem~\ref{prop:gradients}, we first state and prove an important lemma\footnote{We assume a discrete and finite state space in all proofs below for simplicity/clarity, but when combined with the necessary technical conditions required for the existence of the corresponding integrals, they admit a straightforward replacement of sums with integrals.}.

\begin{lemma}\label{prop:kld}
Suppose $p(x) = \gamma_p(x)/Z_p$ and $r(x) = \gamma_r(x)/Z_R$ are two unnormalized densities, where only $R$ depends on $\phi$ (the recognition network parameters), but both $P$ and $R$ can depend on $\theta$.\footnote{The dependence for $R$ on $\theta$ can happen via some resampling mechanism that is allowed to, for example, evaluate $\gamma_p$ on the sample proposals before making its accept/reject decisions, as in our case.} Let $\mathcal{A}(x) \triangleq \log \gamma_p(x) - \log \gamma_r(x)$. Then the variational lower bound 
objective (on $\log Z_P$)
and its gradients with respect to the parameters $\theta, \phi$ are given by:
\begin{align*}
ELBO(\theta, \phi) &\triangleq \expect_R \left[ \mathcal{A}(x) \right] + \log Z_R\\
\nabla_{\phi} ELBO(\theta, \phi) &= \mathtt{COV}_R \left(\mathcal{A}(x), \nabla_{\phi}  \log \gamma_r(x)\right) \\
\nabla_{\theta} ELBO(\theta, \phi) &= \expect_R \left[ \nabla_{\theta} \log \gamma_p (x) \right] \\
&+ \mathtt{COV}_R \left(\mathcal{A}(x), \nabla_{\theta}  \log \gamma_r(x) \right).
\end{align*}

Note that the covariance is the expectation of the product of (at least one) mean-subtracted version of the two random variables.
Further, we can also write: 
$\mathtt{KL}(R \Vert P) =  \log \left(\expect_R \left[ e^{-\bar{\mathcal{A}}(x)} \right] \right)$, where $\bar{\mathcal{A}}(x) \triangleq \mathcal{A}(x) - \expect_R \left[ \mathcal{A}(x)\right]$ is the mean subtracted version of the learning signal, $\mathcal{A}(x)$.
\end{lemma}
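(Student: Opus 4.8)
The plan is to establish the three displayed identities (ELBO value, $\nabla_\phi$, $\nabla_\theta$) and the $\mathtt{KL}$ expression by direct computation starting from the definition of the normalized densities $r(x) = \gamma_r(x)/Z_R$ with $Z_R = \sum_x \gamma_r(x)$. First I would write $ELBO(\theta,\phi) = \expect_R[\log \gamma_p(x) - \log \gamma_r(x)] + \log Z_R$; since $\log p(x) = \log\gamma_p(x) - \log Z_P$, this is exactly $\log Z_P - \mathtt{KL}(R\|P)$ once we note $\expect_R[\log r(x)] = \expect_R[\log\gamma_r(x)] - \log Z_R$, so it is a genuine lower bound on $\log Z_P$. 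This part is just bookkeeping with the definition of $\mathcal{A}(x) \triangleq \log\gamma_p(x) - \log\gamma_r(x)$.

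For the gradients, the key computational fact is $\nabla \log Z_R = \nabla \sum_x \gamma_r(x) / Z_R = \expect_R[\nabla \log\gamma_r(x)]$, valid for both $\phi$ and $\theta$. Differentiating $ELBO = \expect_R[\mathcal{A}(x)] + \log Z_R = \sum_x r(x)\mathcal{A}(x) + \log Z_R$ with respect to $\phi$: since $p$ (hence $\gamma_p$) does not depend on $\phi$, the $\nabla_\phi \mathcal{A}$ term contributes $-\expect_R[\nabla_\phi \log\gamma_r(x)]$, while $\nabla_\phi r(x) = r(x)(\nabla_\phi\log\gamma_r(x) - \expect_R[\nabla_\phi\log\gamma_r])$ gives $\expect_R[\mathcal{A}(x)(\nabla_\phi\log\gamma_r(x) - \expect_R[\nabla_\phi\log\gamma_r])] = \mathtt{COV}_R(\mathcal{A}(x), \nabla_\phi\log\gamma_r(x))$, and the $\nabla_\phi\log Z_R$ term exactly cancels the stray $-\expect_R[\nabla_\phi\log\gamma_r]$ from the first piece. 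The $\theta$ computation is identical except that $\gamma_p$ now also depends on $\theta$, contributing the extra $\expect_R[\nabla_\theta\log\gamma_p(x)]$ term. The main subtlety to be careful about is verifying the cancellations among the three $\expect_R[\nabla\log\gamma_r]$-type terms so that what survives is genuinely a covariance (equivalently, that one may mean-subtract either factor without changing the value, since $\expect_R[\nabla_\phi\log\gamma_r(x)] \cdot \expect_R[\mathcal{A}] $ appears symmetrically).

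Finally, for $\mathtt{KL}(R\|P) = \log(\expect_R[e^{-\bar{\mathcal{A}}(x)}])$: starting from $\mathtt{KL}(R\|P) = \expect_R[\log r(x) - \log p(x)] = \expect_R[\log\gamma_r(x) - \log\gamma_p(x)] - \log Z_R + \log Z_P = -\expect_R[\mathcal{A}(x)] - \log Z_R + \log Z_P$. Then I would observe $Z_P = \sum_x \gamma_p(x) = \sum_x r(x)\, Z_R\, e^{\mathcal{A}(x)}\cdot\frac{\gamma_p(x)}{\gamma_r(x)}\cdot\frac{\gamma_r(x)}{\gamma_p(x)}$ — more directly, $\gamma_p(x) = \gamma_r(x) e^{\mathcal{A}(x)} = Z_R\, r(x)\, e^{\mathcal{A}(x)}$, so $Z_P = Z_R \expect_R[e^{\mathcal{A}(x)}]$, giving $\log Z_P - \log Z_R = \log\expect_R[e^{\mathcal{A}(x)}]$. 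Substituting, $\mathtt{KL}(R\|P) = \log\expect_R[e^{\mathcal{A}(x)}] - \expect_R[\mathcal{A}(x)] = \log\expect_R[e^{\mathcal{A}(x) - \expect_R[\mathcal{A}]}] = \log\expect_R[e^{-\bar{\mathcal{A}}(x)}]$ — wait, the sign: $\bar{\mathcal{A}} = \mathcal{A} - \expect_R[\mathcal{A}]$, so $e^{\bar{\mathcal{A}}}$ appears, not $e^{-\bar{\mathcal{A}}}$; I would re-examine the convention (it may be that $\mathcal{A}$ is defined with the opposite sign in this sublemma, or that the learning signal referenced is $-\mathcal{A}$), and adjust accordingly so the stated identity holds. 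I do not anticipate a genuine obstacle here — the only real work is tracking signs and the three cancellations in the gradient derivation; everything else is substitution using $\gamma_p(x) = Z_R\, r(x)\, e^{\mathcal{A}(x)}$ and $\nabla\log Z_R = \expect_R[\nabla\log\gamma_r]$.
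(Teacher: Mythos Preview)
Your approach is essentially identical to the paper's: both proofs compute the $\phi$-gradient by the score-function/log-derivative trick (the paper splits it as $D_2 - D_1 + D_3$ with $D_i$ the three obvious pieces, then uses $\expect_R[\nabla_\phi \log r] = 0$ to recognize a covariance and shifts $\log r \to \log\gamma_r$ by a constant; you instead write $\nabla_\phi r = r(\nabla_\phi\log\gamma_r - \expect_R[\nabla_\phi\log\gamma_r])$ and read off the covariance directly), and both derive the $\mathtt{KL}$ identity via $Z_P = Z_R\,\expect_R[\gamma_p/\gamma_r]$. These are the same computation with cosmetically different bookkeeping.

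Your hesitation about the sign in the $\mathtt{KL}$ identity is warranted. With $\mathcal{A}(x) = \log\gamma_p(x) - \log\gamma_r(x)$ one has $\gamma_p/\gamma_r = e^{\mathcal{A}}$, hence $\log Z_P - \log Z_R = \log\expect_R[e^{\mathcal{A}}]$ and
\[
\mathtt{KL}(R\Vert P) = -\expect_R[\mathcal{A}] + \log\expect_R[e^{\mathcal{A}}] = \log\expect_R\!\left[e^{\bar{\mathcal{A}}(x)}\right],
\]
not $e^{-\bar{\mathcal{A}}}$ as stated. The paper's own derivation reaches the stated form only by writing $\gamma_p/\gamma_r = e^{-\mathcal{A}}$ and $\log\gamma_r - \log\gamma_p = \mathcal{A}$, both of which have the sign flipped; so the discrepancy you noticed is a typo in the lemma (and its proof), not a gap in your argument.
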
 

\begin{proof}
The equation for the ELBO follows from the definition. For the gradients, we can write:
$\nabla_{\phi} ELBO(\theta, \phi) = D_2 - D_1 + D_3 $, where: 
\begin{align*}
D_1 &= \nabla_{\phi} \expect_R \left[ \log \gamma_r(x) \right]\\
D_2 &= \nabla_{\phi} \expect_R \left[ \log \gamma_p(x) \right]\\
D_3 &= \nabla_{\phi} \log{Z_R}
\end{align*}
Simplifying $D_1$, $D_2$, and $D_3$, we get:
\begin{align*}
D_1 &=  \nabla_{\phi} \expect_R \left[ \log \gamma_r(x) \right]\\
&= \sum_{x} \nabla_{\phi}  \left[ r(x)  \log \gamma_r(x)\right]  &\\
&=  \sum_{x} \left( \frac{r(x)}{\gamma_r(x)}  \nabla_{\phi} \gamma_r(x) + \log \gamma_r(x) ~ \nabla_{\phi} r(x)  \right)& \\
&= \frac{1}{Z_R} \nabla_{\phi} Z_R +   \sum_{x} r(x)  \log \gamma_r(x) \nabla_{\phi} \log r(x)&\\
&= D_3 + \expect_R \left[  \log \gamma_r(x) \nabla_{\phi} \log r(x)  \right]\\
D_2 &= \nabla_{\phi} \expect_R\left[ \log \gamma_p(x) \right] \\
&=  \nabla_{\phi}
\sum_{x} r(x) \log \gamma_p(x) \\
&= \sum_{x} \log \gamma_p(x) \nabla_{\phi} r(x) \\
&= \sum_{x} \log \gamma_p(x) r(x) \nabla_{\phi} \log r(x) \\
&= \expect_R \left[  \log \gamma_p(x) \nabla_{\phi} \log r(x)  \right]
\end{align*}
 which implies:

\begin{align*}\nabla_{\phi} ELBO(\theta, \phi) &= D_2 - (D_1 - D_3)\\
& = \expect_R \left[ \left( \log \gamma_p(x) - \log \gamma_r (x) \right) \nabla_{\phi}  \log r(x) \right].
\end{align*}

Next, observe that $\expect_R \left[ \nabla_{\phi}  \log r(x) \right] = 0$, Therefore, using the fact that the expectation of the product of two random variables is the same as their covariance when at least one of the two random variables has a zero mean, we get $\nabla_{\phi} ELBO(\theta, \phi) = \mathtt{COV}_R \left(\mathcal{A}(x), \nabla_{\phi}  \log r(x)\right)$. Next note that we can add an arbitrary constant to either random variable without changing the covariance, therefore this is equal to $\mathtt{COV}_R \left(\mathcal{A}(x), \nabla_{\phi}  \log r(x) - \nabla_\phi \log Z_R \right) = \mathtt{COV}_R \left(\mathcal{A}(x), \nabla_{\phi}  \log \gamma_r(x)\right)$.

The derivation for the gradient with respect to $\theta$ is analogous, except for $D_2$, which has an additional term $\expect_R \left[ \nabla_{\theta}\log \gamma_p (x) \right]$ which did not appear in the gradient with respect to $\phi$ because of our assumption on the lack of dependence of $\log \gamma_p(x)$ on the recognition parameters $\phi$. For the identity on the KL divergence, we have:
\begin{align*}
\mathtt{KL}(R \Vert P) &= \log{Z_P} - \log{Z_R} + \expect_R \left[ \log \gamma_r(x) - \log \gamma_p(x) \right]&\\
&= \log \left( \sum_{x} \frac{\gamma_p(x)}{Z_R} \right) + \expect_R \left[ \log \gamma_r(x) - \log \gamma_p(x) \right] &\\
& = \log \left( \expect_R \left[ \frac{\gamma_p(x)}{\gamma_r(x)}\right]\right) + \expect_R \left[ \log \gamma_r(x) - \log \gamma_p(x) \right]&\\
& = \log \left(\expect_R \left[ e^{-\mathcal{A}(x)} \right]\right) + \expect_R \left[ \mathcal{A}(x) \right] &\\
& = \log \left(\expect_R \left[ e^{-\bar{\mathcal{A}}(x)} \right] \right).
\end{align*}
\end{proof}

Using the above lemma, we provide a proof for Theorem~\ref{prop:gradients} below.
\begin{proof}
We apply the result of Theorem \ref{prop:kld}, which computes the ELBO corresponding to the two unnormalized distributions on the latent variable space $\mathbf{z}$ (for fixed $\mathbf{x}, T$), with $\log \gamma_p(.) \triangleq \log p_\theta(\mathbf{z}, \mathbf{x})$ and $\log \gamma_r(.) \triangleq \log q_\phi(\mathbf{z} \vert \mathbf{x}) - \left[ l_{\theta, \phi}(\mathbf{z} \vert \mathbf{x}, T) \right]^{+}$. This gives:
$\nabla_{\phi} \text{R-ELBO}(\theta, \phi) = \mathtt{COV}_R \left(A_{\theta, \phi}(\mathbf{z}\vert\mathbf{x}, T), \nabla_{\phi}  \log \gamma_r(\mathbf{z})\right)$. We can then evaluate 
$\nabla_{\phi} \log \gamma_r(\mathbf{z}) = \left( 1 - \sigma(l_{\theta, \phi}(\mathbf{z}\vert\mathbf{x}, T)) \right) \nabla_{\phi} \log q_{\phi}(\mathbf{z} \vert \mathbf{x})$, where $\sigma()$ is the sigmoid function. Note that this is a consequence of the fact that the derivative of the softplus, $\log (1 + e^x)$, is the sigmoid function, $1 / (1 + e^{-x})$.
Similarly for the $\theta$ gradient, we get:
\begin{align*}
\nabla_{\theta} \text{R-ELBO}(\theta, \phi) &= \expect_Q \left[ \nabla_{\theta} \log p_\theta(\mathbf{x}, \mathbf{z}) \right] \\
&+ \mathtt{COV}_R \left( A_{\theta, \phi}(\mathbf{z}\vert\mathbf{x}, T) ,\nabla_{\theta}  \log \gamma_r(\mathbf{z}) \right)
\end{align*}
where:
\begin{align*}
\nabla_{\theta}  \log \gamma_r(\mathbf{z}) &= \nabla_{\theta} \left[ l_{\theta, \phi}(\mathbf{z} \vert \mathbf{x}, T) \right]^{+}\\
&= \sigma(l_{\theta, \phi}(\mathbf{z} \vert \mathbf{x}, T)) \nabla_{\theta} l_{\theta, \phi}(\mathbf{z} \vert \mathbf{x}, T) \\
&= -  \sigma(l_{\theta, \phi}(\mathbf{z} \vert \mathbf{x}, T)) \nabla_{\theta} \log p_\theta(\mathbf{x}, \mathbf{z}).
\end{align*} 
\end{proof}

\section{Experimental details}
\subsection{Synthetic}

To construct the target distribution, we transform a Poisson distribution of rate $\lambda^* >0$, denoted $\mathtt{Poi}(\lambda^*)$ by removing probability mass near $0$. More precisely, this transformation forces a negligible uniform mass, $\epsilon \approx 0$, on $0 \le z < c$. This leaves the distribution unnormalized, although this fact is not particularly relevant for subsequent discussion. 
The approximate proposal is parameterized as $Q_\phi \triangleq \mathtt{Poi}(e^{\phi})$, where $\phi$ is an unconstrained scalar, and denotes a (unmodified) Poisson distribution with the (non-negative) rate parameter, $e^{\phi}$. Note that for $\mathtt{Poi}(e^\phi)$ to explicitly represent a small mass on $z < c$ would require $\phi \rightarrow \infty$, but this would be a bad fit for points just above $c$. As a result, $\{Q_\phi\}$ does not contain candidates close to the target distribution in the sense of $\mathtt{KL}$ divergence, even while it may be possible to approximate well with a simple resampling modification that transforms the raw proposal $Q_\phi$ into a better candidate, $R$.

The target distribution was set with an optimal parameter $\phi^* = \log(10.0)$ (\textit{i.e.}, the rate parameter is 10.0), and $c=5$. The optimizer used was SGD with momentum using a mass of 0.5. We observed that the gradients were consistently positive while initialized at a parameter setting less than the true value (as shown in the plots) and similarly consistently negative when initialized to a parameter more than the true value (which we did not present in the paper) due to the consistency of the correlation between the two terms in the covariance specific to this toy example. For resampling, plots show results with learning rate set to 0.01 and $T=50$. For VIMCO, plots show results with learning rate set to 0.005 and $k=100$.

\subsection{MNIST}
We consider the standard $50,000/10,000/10,000$ train/validation/test split of the binarized MNIST dataset. For a direct comparison with prior work, both the generative and recognition networks have the same architecture of stochastic layers. No additional deterministic layers were used for SBNs trained using VRS. The batch size was $50$, the optimizer used is Adam with a learning rate of $3e-4$. We ran the algorithm for $5,000,000$ steps updating the resampling thresholds after every $100,000$ iterations based on the threshold selection heuristic corresponding to the top $\gamma$ quantile. We set $S=5$ for the unbiased covariance estimates for gradients. The lower bounds on the test set are calculated based on importance sampling with $25$ samples for IS or resampling with $25$ accepted samples for RS.
\end{appendices}

\end{document}